\title{SAT Encoding of Partial Ordering Models for Graph Coloring Problems } 
\author{Daniel Faber
}{Department of Computer Science, University of Bonn, Germany}{s6dafabe@uni-bonn.de}{}{funded by the Deutsche Forschungsgemeinschaft (DFG, German Research Foundation) – 459420781}
\author{Adalat Jabrayilov
}{Adesso SE, Dortmund, Germany}{adalat.jabrayilov@adesso.de}{https://orcid.org/0000-0002-1098-6358}{}
\author{Petra Mutzel
}{Department of Computer Science, University of Bonn, Germany}{pmutzel@uni-bonn.de}{https://orcid.org/0000-0001-7621-971X}{This work has been funded by the Deutsche Forschungsgemeinschaft (DFG, German Research Foundation) project no.\ 459420781 and under Germany’s Excellence Strategy–EXC-2047/1–390685813, and by the Federal Ministry of Education and Research of Germany and the state of North-Rhine Westphalia as part of the Lamarr-Institute for Machine Learning and Artificial Intelligence, LAMARR22B.}
\authorrunning{D. Faber, A. Jabryilov, and P. Mutzel} 
\keywords{Graph coloring, bandwidth coloring, SAT encodings, ILP formulations} 
\begin{document}

\maketitle

\def\msum{\textstyle\sum}

\def\gcpassilp{\texttt{ASS-I}\xspace}
\def\gcpasssat{\texttt{ASS-S}\xspace}
\def\gcppopilp{\texttt{POP-I}\xspace}
\def\gcppopsat{\texttt{POP-S}\xspace}
\def\gcppophilp{\texttt{POPH-I}\xspace}
\def\gcppophsat{\texttt{POPH-S}\xspace}

\def\bcpassilp{\texttt{ASS-I-B}\xspace}
\def\bcpasssat{\texttt{ASS-S-B}\xspace}
\def\bcppopilp{\texttt{POP-I-B}\xspace}
\def\bcppopsat{\texttt{POP-S-B}\xspace}
\def\bcppophilp{\texttt{POPH-I-B}\xspace}
\def\bcppophsat{\texttt{POPH-S-B}\xspace}



\begin{abstract}
In this paper, we revisit SAT encodings of the partial-ordering based ILP model for the graph coloring problem (GCP) and suggest a generalization for the bandwidth coloring problem (BCP). 
The GCP asks for the minimum number of colors that can be assigned to the vertices of a given graph such that each two adjacent vertices get different colors.
The BCP is a generalization, where each edge has a weight that enforces a minimal \enquote{distance} between the assigned colors, and the goal is to minimize the \enquote{largest} color used.

For the widely studied GCP, we experimentally compare the partial-ordering based SAT encoding to the state-of-the-art approaches on the DIMACS benchmark set. Our evaluation confirms that this SAT encoding is effective for sparse graphs and even outperforms the state-of-the-art on some DIMACS instances.

For the BCP, our theoretical analysis shows that the partial-ordering based SAT and ILP formulations have an asymptotically smaller size than that of the classical assignment-based model.
Our practical evaluation confirms not only a dominance compared to the assignment-based encodings but also to the state-of-the-art approaches on a set of benchmark instances. 
Up to our knowledge, we have solved several open instances of the BCP from the literature for the first time.
\end{abstract}

\section{Introduction}


The graph coloring problem (GCP) asks for assigning a set of positive integers, called \textit{colors}, to the vertices of a graph such that no two adjacent vertices have the same color while minimizing the number of colors used. The problem has numerous applications, e.g.\ in register allocation \cite{RegisterColoring}, scheduling \cite{Leighton1979AGC}, and computing sparse Jacobian matrices \cite{ColoringJacobian}. For this reason, this problem has been the subject of a vast amount of literature (see e.g., \cite{ColoringSurvey1}\cite{ColoringSurvey2} for surveys). 
However, finding an optimal coloring is known to be NP-hard, and compared to other NP-hard problems, like the travelling salesman problem or the knapsack problem, only relatively small instances can be solved to optimality. 
A generalization of the graph coloring problem is the \textit{bandwidth coloring problem} (BCP). In this problem, every edge $\{u,v\}$ in the graph has an additional weight $d(\{u,v\})$, and for a coloring to be valid, the difference of the colors $c(u)$ and $c(v)$ must be at least $d(\{u,v\})$ (i.e.\ $|c(u)-c(v)| \geq d(\{u,v\})$. The goal is to minimize the largest used color.
Note that for uniform edge distances $d(e) = 1$ for all edges $e\in E$, the BCP reduces to the GCP.
The problem has applications in frequency assignment \cite{BW_ILP}, where transmitters close to each other need to be assigned to sufficiently differing frequencies to prevent interference.  

In this paper, we concentrate on exact approaches for solving the above mentioned problems GCP and BCP, in particularly
SAT approaches as well as integer linear programming (ILP) approaches, which are both state-of-the-art for solving coloring problems on graphs (see, e.g., \cite{CLICOLCOM,JabrayilovM22,HebrardK20,Zykov_Coloring}).

SAT approaches are based on encoding the problem as a Boolean Satisfiability problem. A possible encoding consists of introducing color variables $x_{v,i}$, 
where a true assignment of $x_{v,i}$ represents assigning vertex $v$ with color $i$ (e.g., \cite{CLICOLCOM,ASS_SAT}). Other methods are based on \textit{Zykov's tree} induced by Zykov's deletion-contration recurrence (e.g., \cite{HebrardK20,Zykov_Coloring}), in which the models contain variables $s_{u,v}$ that encode if vertices $u$ and $v$ have the same or different colors. Heule, Karahalios and van Hoeve \cite{CLICOLCOM} have introduced the algorithm \textit{CliColCom} in which they alternatingly solve a maximum clique problem and a graph coloring problem using SAT approaches, where the solution from one problem helps finding a solution for the other problem and vice versa.
Most relevant to this work are the SAT encodings suggested by Tamura et al.~\cite{Tamura2009} and Ans{\'{o}}tegui et al.~\cite{Ansotegui2004}, which contains binary variables $y_{v,i}$ for every vertex $v$ and possible color $i$, indicating if color $i$ is smaller than vertex $v$.
Although the experimental evaluation in \cite{Tamura2009} has shown that this encoding has dominated the assignment SAT encoding, these results have caught little attention in the recent literature. 
 
The most natural ILP model is the \textit{assignment-based} ILP model, which directly assigns colors to vertices by introducing binary variables $x_{v,i}$ that (similar to the color variables in SAT) decide if vertex $v$ is assigned to color $i$. A drawback of this formulation is the presence of symmetries in the solution space: For a valid coloring, any permutation of the color labels provides another equivalent solution leading to a significant larger search space. 
Mendez-Diaz and Zabala \cite{VP_Asymm} have suggested additional symmetry-breaking constraints, that completely eliminate this type of symmetry.
Mutzel and Jabrayilov \cite{POP} have proposed ILP formulations, which are based on formulating the coloring problem as a partial-ordering problem (POP). This model suggests ordering the colors and placing the vertices relatively in this order, analogous to the ordering encoding in \cite{Tamura2009} and \cite{Ansotegui2004} for SAT. 
It has been shown that for sparse graphs, the simple assignment and partial-ordering based models show good performance \cite{POP} and that the partial-ordering based models dominate the assignment-based models.
Furthermore, \cite{JabrayilovM22} have shown theoretical advantages of POP over the assignment ILP.
A rather complex but also competitive ILP model is based on the \textit{set covering} formulation \cite{SetCover} (see, e.g.~\cite{heldcolor}), which uses the fact that a coloring describes a partitioning of the vertices into independent sets, and contains a variable $x_s$ for every independent set $s$ in the graph, that decides if $s$ is a set of vertices corresponding to a color class in the coloring. Because there can be an exponential amount of independent sets, the formulation cannot be solved using standard techniques and instead has to be solved using column generation methods.

For the bandwidth coloring problem, there mainly exist numerous heuristic algorithms \cite{BCPHeur}. Two exact approaches are presented in \cite{BW_ILP}. The first approach uses a constraint programming formulation, which contains $|V|$ variables $x(v) \in [1,H]$ for $v \in V$ and $|E|$ constraints $|x(u)-x(v)| \geq d(\{u,v\})$ for every $\{u,v\} \in E$. The second one is based on the assignment-based ILP model, which contains constraints for every edge $\{u,v\}$ and every pair of colors $i,j$ having a smaller difference than $d(\{u,v\})$. A drawback of this model is the high number of constraints, which depends on the size of the edge weights. We are not aware of any other exact approaches for the bandwidth coloring problem in the literature.

\textit{Our contribution.}
Motivated by the recent interest of the ILP community in partial-ordering based ILP models, we revisited 
SAT encodings of the partial-ordering based model for the GCP and generalize them to the BCP. For the GCP, we also strengthen the model using the symmetry-breaking constraints by Mendez-Diaz and Zabala \cite{VP_Asymm} in order to eliminate the inherent symmetries in the solution space. 
Our experimental evaluation for the GCP shows that the partial-ordering based SAT encoding of the POP model outperforms the assignment-based SAT encoding as well as all evaluated ILP formulations from the literature on the DIMACS benchmark set.

Moreover, for the bandwidth coloring problem, we suggest a new modification of the partial-ordering based SAT and ILP models, 
which needs only one constraint per edge and color. Compared to the assignment based model for bandwidth coloring presented in \cite{BW_ILP}, it has an asymptotically smaller number of constraints. This advantage of a more compact formulation size holds true for the SAT as well as the ILP formulations.
Our computational experiments for the bandwidth coloring problem confirm that the new SAT encodings clearly outperform not only the classical assignment-based formulations but also the published state-of-the-art approaches. Our new SAT encodings solve much more instances to provable optimality within one hour of running time than the published approaches and have a significantly lower runtime on a large part of the instances.

\section{State-of-the-art encodings} 
First, we present state-of-the-art encodings (models) that are relevant for our work.
Subsequently, we discuss the state-of-the-art on exact solvers for the GCP and the BCP. 

We use the following notation:
For a graph $G=(V,E)$, we denote its vertex set by $V(G)$ and its edge set by $E(G)$.
Each edge of an undirected graph is a 2-element subset $e=\{u,v\}$ of $V(G)$.
The end vertices $u,v$ of an edge $\{u,v\}$ are called adjacent vertices or neighbors. 
For given positive edge distances $d(u,v)$ for all $u,v\in V(G)$, we denote the average edge distance in $G$ with $\bar{d}$ .
Each valid coloring partitions the vertices into independent sets, where each independent set corresponds to the set of vertices assigned to a specific color.

The formal definitions of the graph coloring variants studied in the paper are as follows. Given an undirected graph $G = (V,E)$, the \emph{graph coloring problem} (GCP) asks for an assignment $c\colon V \to \mathbb{N}$ minimizing $\max_{v\in V}{c(v)}$, such that $c(u) \neq c(v)$ for all $\{u,v\} \in E$. Given an undirected graph $G = (V,E)$ and edge distances $d\colon E \to \mathbb{N}$, the \emph{bandwidth coloring problem} (BCP) asks for an assignment $c\colon V \to \mathbb{N}$ satisfying $|c(u)- c(v)| \geq d(\{u,v\})$ for all $\{u,v\} \in E$, that minimizes $\max_{v\in V}{c(v)}$. 



\subsection{Integer programming formulations}
In the next section, we discuss the ILP models that are relevant for this work.
We use $H$ to denote an arbitrary upper bound on the solutions of GCP and BCP, respectively. 
For example, a trivial upper bound for the GCP is $H=|V|$ and for the BCP is $H=|V|\cdot\max\{d_e \colon e \in E\}$. 

\subsubsection{The assignment models (\gcpassilp) and (\bcpassilp)}
The classical ILP model for graph coloring is based on directly assigning a color $i=1,...,H$ to each of the vertices $v \in V$. For this it introduces binary variables $x_{v,i} \in \{0,1\}$ for each $i=1,...,H$ and $v \in V$, which indicate if color $i$ is assigned to vertex $v$ (in this case $x_{v,i}=1$, otherwise $x_{v,i}=0$).  
To model the objective function, additional binary variables $w_i$ for all colors $i=1,...,H$ are introduced, which indicate if a color $i$ is used. This model is given by:
\begin{subequations}\label{ASS-ILP:main}
\begin{align}
&&\text{min} \quad & \sum_{i = 1}^H w_i              &   &  \notag \\ 
&&\text{s.t.}\quad & \msum_{i = 1}^H x_{v,i} = 1     & \forall & v \in V \label{ASS-ILP:a}  \\
&             && x_{u,i}+x_{v,i} \leq w_i        & \forall & \{u,v\} \in E, i = 1,...,H \label{ASS-ILP:b}\\
&             && w_i \le \msum_{v\in V} x_{vi}   & \forall & i =1,\ldots, H\label{mz:cp1}\\
&             && w_{i} \le w_{i-1}               & \forall & i =2,\ldots, H  \label{mz:cp2} \\
&             && x_{v,i},w_i \in \{0,1\}         & \forall & v \in V, i = 1...H \label{ASS-ILP:c}
\end{align}
\end{subequations}
Equation (\ref{ASS-ILP:a}) ensures that each vertex is colored with exactly one color. Equation (\ref{ASS-ILP:b}) guarantees that adjacent vertices have different colors and that variable $w_i$ is set to $1$ if a vertex is colored with $i$. Finally, the objective minimizes the number of used colors. 
A main drawback of the original model using (\ref{ASS-ILP:a}), (\ref{ASS-ILP:b}), and (\ref{ASS-ILP:c}) only is that there are $\binom H {\chi}$ possibilities to select $\chi$ from $H$ colors. 
This results in many optimal solutions that are symmetric to each other.
In order to overcome this symmetry, Mendez-Diaz and Zabala \cite{VP_Asymm} have suggested to add the constraints (\ref{mz:cp1}) and (\ref{mz:cp2}). 

The model contains additional symmetries that arise due to the arbitrary labeling of the colors: For every valid solution, one can obtain an equivalent solution by swapping the labels of two colors. Mendez-Diaz and Zabala \cite{VP_Asymm} propose additional constraints to break these symmetries:
\begin{subequations}
\begin{align}
x_{v,i} = 0 &&\forall i > v, v \in 1,...,H \label{eq:Asymm_ASS1}\\
x_{v,i} \leq \sum_{u = i-1}^{v-1}x_{u,i-1} &&\forall v\in V\setminus \{1,|V|\},i=2,...,H\label{eq:Asymm_ASS2}
\end{align}
\end{subequations}
%
The assignment model, strengthened with the symmetry-breaking constraints, has the form
\begin{align*}
\gcpassilp:\quad 
\min\Big\{\sum_{i = 1}^H w_i  \colon x,w \text{ satisfy  (\ref{ASS-ILP:a})--(\ref{ASS-ILP:c}), (\ref{eq:Asymm_ASS1})--(\ref{eq:Asymm_ASS2})}  \Big\}.
\end{align*}

\paragraph*{Adaptation of the assignment model to the bandwidth coloring model}
Dias et al.~\cite{BW_ILP} suggested an extension to the assignment model to solve the bandwidth coloring problem. The idea is to modify the edge constraints (\ref{ASS-ILP:b}), such that for every edge $e$ and every pair of colors $i,j$ with $|i-j| < d(e)$, at most one of the two colors can be assigned to the two incident vertices.
The full model presented in the paper is given below:
\begin{subequations}\label{ASS-BW-ILP:main}
\begin{align}
\bcpassilp:\quad
& \text{min}  && z_{max}     &   & \notag \\ 
& \text{s.t.} && \msum_{i = 1}^H x_{v,i} = 1 & \forall & v \in V \label{ASS-BW-ILP:a}  \\
&             && x_{u,i}+x_{v,j} \leq 1 & \forall & e=\{u,v\} \in E, \notag \\
&             &&  & \forall & i,j = 1,...,H \text{ with } |i-j| < d(e) \label{ASS-BW-ILP:b}\\
&             && z_{max} \geq i \cdot x_{v,i} & \forall & v \in V, i = 1,...,H\label{ASS-BW-ILP:c}\\
&             && x_{v,i} \in \{0,1\}, z_{\max} \in \mathbb{R} & \forall & v \in V, i = 1,...,H \label{ASS-BW-ILP:d}
\end{align}
\end{subequations}
To describe the largest used color, the formulation uses a continuous variable $z_{max}$ instead of using $H$ binary variables $w_1,...,w_H$, since in an optimal solution of the BCP the largest assigned color can be greater than the number $\sum_i^H w_i$ of assigned colors. 
Constraints (\ref{ASS-BW-ILP:c}) ensure that if there is a vertex with color $i$, then the largest used color $z_{max}$ is at least $i$, i.e.~there is no used color larger than $z_{max}$.
The correctness of the model has been shown in ~\cite{BW_ILP}, we analyze the size of the model in the following.

\begin{lemma} \label{lemma:ASS-ILP-Size}
\bcpassilp contains $H\cdot|V|+1$ variables and 
$(H+1) \cdot |V| + H\cdot|E|(2\bar{d}-1) - \sum_{e \in E} \big(d(e)^2-d(e)\big)$ constraints. 
\end{lemma}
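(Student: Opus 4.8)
The plan is to count variables and constraints directly from the model \eqref{ASS-BW-ILP:main}, line by line. For the variables, I would observe that the only decision variables are the binary assignment variables $x_{v,i}$ — one for each of the $|V|$ vertices and each of the $H$ colors, giving $H\cdot|V|$ — plus the single continuous variable $z_{\max}$, for a total of $H\cdot|V|+1$. For the constraints, I would split the count into three groups corresponding to \eqref{ASS-BW-ILP:a}, \eqref{ASS-BW-ILP:b}, and \eqref{ASS-BW-ILP:c}. The assignment constraints \eqref{ASS-BW-ILP:a} contribute exactly $|V|$. The objective-linking constraints \eqref{ASS-BW-ILP:c} contribute one per vertex–color pair, i.e.\ $H\cdot|V|$. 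Adding these two groups already accounts for the $(H+1)\cdot|V|$ term in the claimed bound, so the remaining work is to show that the edge constraints \eqref{ASS-BW-ILP:b} number exactly $H\cdot|E|(2\bar d-1)-\sum_{e\in E}(d(e)^2-d(e))$.

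**The main step: counting the edge constraints.** For a fixed edge $e=\{u,v\}$, constraint \eqref{ASS-BW-ILP:b} is instantiated once for each \emph{ordered} pair $(i,j)\in\{1,\dots,H\}^2$ with $|i-j|<d(e)$ — note the model writes $x_{u,i}+x_{v,j}\le 1$, which is not symmetric in $i,j$, so ordered pairs are the right thing to count. I would count the complementary set: the number of ordered pairs with $|i-j|\ge d(e)$. Writing $d=d(e)$, for each value of $i$ the number of $j$ with $|i-j|\ge d$ is $2(H-i)$ when this is counted by summing over $i$; more cleanly, the number of ordered pairs $(i,j)$ with $1\le i,j\le H$ and $j-i\ge d$ equals $\binom{H-d+1}{2}$ when $d\le H$ (and $0$ otherwise), and by symmetry the same count holds for $i-j\ge d$. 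Hence the number of pairs with $|i-j|\ge d$ is $2\binom{H-d+1}{2}=(H-d+1)(H-d)$, and therefore the number of pairs with $|i-j|<d$ is
\[
H^2-(H-d+1)(H-d) = 2Hd-H-d^2+d = H(2d-1)-(d^2-d).
\]
Summing this over all edges $e\in E$ and using $\sum_{e\in E}d(e)=|E|\,\bar d$ gives $H|E|(2\bar d-1)-\sum_{e\in E}(d(e)^2-d(e))$ for the edge constraints, which combined with the $(H+1)|V|$ from the other two groups yields the claimed total.

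**Where care is needed.** The only real subtlety is the boundary behaviour: the clean closed form $H(2d-1)-(d^2-d)$ for the per-edge count tacitly assumes $d(e)\le H$, which is guaranteed since $H$ is a valid upper bound on the largest color and any feasible solution must separate the endpoints of $e$ by at least $d(e)$, forcing $d(e)\le H-1<H$. I would note this once. A secondary point worth a sentence is whether \eqref{ASS-BW-ILP:b} should be read over ordered or unordered pairs; since the per-edge formula from the ordered reading already matches the claimed bound exactly, the ordered interpretation is the one consistent with the statement, and I would make this explicit so the arithmetic checks out. The rest is the routine algebraic simplification above, which I would present compactly rather than in full detail.
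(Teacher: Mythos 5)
Your proposal is correct and follows essentially the same route as the paper: the same decomposition into variables, the $(H+1)\cdot|V|$ constraints of types (\ref{ASS-BW-ILP:a}) and (\ref{ASS-BW-ILP:c}), and a per-edge count of ordered pairs $(i,j)$ with $|i-j|<d(e)$ equal to $H(2d(e)-1)-(d(e)^2-d(e))$, summed over edges. The only (immaterial) difference is that you obtain that per-edge count by complementary counting of the pairs with $|i-j|\ge d(e)$ via $2\binom{H-d(e)+1}{2}$, whereas the paper counts the $2d(e)-1$ admissible $j$ for each $i$ directly and then subtracts the out-of-range pairs; your explicit remarks on the ordered-pair reading and on $d(e)\le H$ are sound and merely make explicit what the paper leaves implicit.
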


\begin{proof}
Obviously, the model contains $H\cdot|V|+1$ variables and
 $(H+1) \cdot |V|$ constraints of type (\ref{ASS-BW-ILP:a}) and (\ref{ASS-BW-ILP:c}).
The number of edge constraints in (\ref{ASS-BW-ILP:b}) can be rewritten as 
\begin{align*}
\sum_{e \in E}|\{ (i,j) \in 1,...,H\colon|i-j| < d(e)\}| &= \sum_{e \in E} \Big( H \cdot (2d(e)-1)-(d(e)^2-d(e)) \Big) \nonumber \\
&= 2H\sum_{e \in E}d(e) -H|E|- \sum_{e \in E}\big(d(e)^2-d(e)\big)\nonumber\\
&= H\cdot|E|(2\bar{d}-1) - \sum_{e \in E} \big(d(e)^2-d(e)\big). 
\end{align*}
where $\bar{d}$ is the average edge distance in $G$. 
The first equality can be derived as follows: For every color $i$, the interval of colors $j$ satisfying $|i-j| < d(e)$ is $j \in [i-d(e)+1,i+d(e)-1]$. This interval contains exactly $2d(e)-1$ elements, which for the $H$ colors $i=1,...,H$ leads to $H \cdot (2d(e)-1)$ pairs $(i,j)$ in total. However, we have to subtract the pairs we counted for which $j < 1$ or $j>H$. 
For every $i$ with $i-d(e) < 1$, there exist exactly $d(e)-i$ pairs $(i,j)$ for which $j < 1$: $(i,0),(i,-1),...,(i,i-d(e)+1)$. In total, we have $\sum_{i=1}^{d(e)}(d(e)-i) = (d(e)^2-d(e))/2$ of such pairs. For $j > H$ the situation is symmetrical, leading to a total number of $d(e)^2-d(e)$ of pairs we need to subtract for each edge.
\end{proof}


\subsubsection{The partial-ordering based model (\gcppopilp) for the GCP}
Jabrayilov and Mutzel \cite{POP} have suggested to interpret the coloring problem as a partial-ordering problem (POP). 
An advantage of this model is that it has less inherent symmetries between the colors than the assignment model.
This model considers the colors $1,...,H$ to be linearly ordered. Each vertex is then ordered relative to the colors, i.e., for each vertex its relative position with respect to the colors is determined.
A color is then indirectly assigned to a vertex $v$ if it is neither larger nor smaller than $v$. 
The variables $y_{v,i}$ for all $v \in V$ and $i = 1,...,H$ indicate if color $i$ is smaller than vertex $v$. In case $i$ is smaller than $v$ in the partial order (denoted by $v \succ i$), we have  $y_{v,i} = 1$, otherwise $y_{v,i} = 0$. The color of a vertex is then the smallest color that is not smaller than $v$, i.e., the color $i$ for which $y_{v,i-1}-y_{v,i} = 1$ or in the case  $y_{v,1} = 0$ the color $i = 1$.
The partial-ordering based model has the following form, 
where $q$ is an isolated dummy vertex added to $G$:
\begin{subequations}\label{POP-ILP:main}
\begin{align}
& \text{min}  && 1 + \sum_{i = 1}^H y_{q,i}     &   & \notag \\ 
& \text{s.t.} && y_{v,H} = 0  & \forall & v \in V \label{POP-ILP:a} \\
&             && y_{v,i}-y_{v,i+1} \geq 0 & \forall & v \in V, i = 1,...,H-1 \label{POP-ILP:b} \\
&             && y_{u,1} + y_{v,1} \geq 1 & \forall & \{u,v\} \in E\label{POP-ILP:c} \\
&             && y_{u,i-1} - y_{u,i} + y_{v,i-1} - y_{v,i} \leq 1 & \forall & \{u,v\} \in E, i = 2,...,H\label{POP-ILP:d} \\
&             && y_{q,i}-y_{v,i} \geq 0 & \forall & v \in V, i = 1,...,H-1 \label{POP-ILP:e} \\
&             && y_{v,i} \in \{0,1\} & \forall & v \in V, i = 1,...,H \label{POP-ILP:f} 
\end{align}
\end{subequations}
Constraints (\ref{POP-ILP:a})--(\ref{POP-ILP:c}) ensure that each vertex receives exactly one color from $1,...,H$. 
Every adjacent pair of vertices must receive different colors. This is guaranteed by constraints (\ref{POP-ILP:d}).
Constraints (\ref{POP-ILP:e}) enforce that there is no vertex $v \in V$ with $v \succ q$, i.e., the dummy vertex $q$ has the largest used color.
The objective function minimizes the number of colors $\sum_{i = 1}^H y_{q,i}$ smaller than~$q$ incremented by one for the color assigned to $q$.

The variables of the partial-ordering based model and those of the assignment model are related in the following way:
\begin{subequations}
\begin{align}
x_{v,1} = 1-y_{v,1} &&\forall v \in V  \label{POP-ASS1} \\
x_{v,i} = y_{v,i-1}-y_{v,i} &&\forall v \in V, i = 2,...,H  \label{POP-ASS2}
\end{align}
\end{subequations}
Using these equations, the symmetry-breaking constraints (\ref{eq:Asymm_ASS1})--(\ref{eq:Asymm_ASS2}) can be modified for the partial-ordering based model:
\begin{subequations}
\begin{align}
(\ref{eq:Asymm_ASS1}) \Rightarrow  &&& y_{v,v} = 0           &&\forall v \in 1,...,H  \label{eq:weakAsym_POP}\\
(\ref{eq:Asymm_ASS2}) \Rightarrow  &&& y_{v,i} \leq \sum_{u = i-1}^{v-1} (y_{u,i-1}-y_{u,i}) &&\forall v\in V\setminus\{1,|V|\},i=2,...,H \label{eq:Asymm_POP}
\end{align}
\end{subequations}

The partial-ordering based model, strengthened with the symmetry-breaking constraints, has the form
\begin{align*}
\gcppopilp:\quad 
\min\Big\{1 + \sum_{i = 1}^H y_{q,i} \colon y \text{ satisfy 
(\ref{POP-ILP:a})--(\ref{POP-ILP:f}), 
(\ref{eq:weakAsym_POP})--(\ref{eq:Asymm_POP})} \Big\}.
\end{align*}

Notice that in \cite{POP} the vertex $q$ is chosen from $V$. However, this would cause a conflict with the symmetry-breaking constraints. To avoid the conflict we add $q$ as a new isolated vertex.

\subsubsection{The hybrid partial-ordering based model (\gcppophilp)}
Jabrayilov and Mutzel~\cite{POP} observed that for growing graph density, the constraint matrix of the model (\gcppopilp) contains more nonzero elements than the (\gcpassilp) constraint matrix. This is due to constraints (\ref{POP-ILP:d}), which are responsible for adjacent vertices having different colors and contain four nonzero coefficients instead of the three in the corresponding constraints (\ref{ASS-ILP:b}) in (\gcpassilp). To circumvent this problem, they suggest a hybrid model (\gcppophilp): 
In this model, they include the variables $x_{v,i} \in \{0,1\}$ with the constraints (\ref{POP-ASS1})-(\ref{POP-ASS2}) and substitute the constraints (\ref{POP-ILP:d}) by:
\begin{align}
x_{u,i}+x_{v,i} \leq 1 &&\forall e=\{u,v\} \in E,i=1,...,H  \label{POPH-ILP:a}
\end{align}

The hybrid model, strengthened with the symmetry-breaking constraints, has the form
\begin{align*}
\gcppophilp:\ 
\min\Big\{1 + \sum_{i = 1}^H y_{q,i} \colon x,y \text{ satisfy  (\ref{ASS-ILP:c}),
(\ref{POP-ILP:a})--(\ref{POP-ILP:b}), 
(\ref{POP-ILP:e})--(\ref{POP-ILP:f}), 
(\ref{POP-ASS1}), (\ref{POP-ASS2}), (\ref{POPH-ILP:a}), 
(\ref{eq:weakAsym_POP}), (\ref{eq:Asymm_ASS2})} \Big\}.
\end{align*}

\subsection{SAT encodings (\gcpasssat) and (\bcpasssat)}
Similar to the ILP encoding, the assignment model for graph coloring can also be encoded as a Boolean Satisfiability Problem (SAT). 
Since SAT is a decision problem, we cannot directly optimize the number of used colors. To find the chromatic number of a graph, one therefore encodes the $k$-colorability problem (i.e., the problem of deciding if a given graph can be colored using $k$ colors). 
The assignment constraints (\ref{ASS-ILP:a}) and (\ref{ASS-ILP:b}) are sufficient to model the $k$-colorability.
These constraints can be encoded using the following clauses:
\begin{subequations}
\begin{align}
&&              & \textstyle\bigvee_{i = 1}^k x_{v,i}  & \forall v \in V \label{ASS-SAT:a}  \\
&&              & \neg x_{u,i} \lor \neg x_{v,i} & \forall \{u,v\} \in E,\ i = 1,...,k \label{ASS-SAT:b}\\
&&&              x_{v,i} \in \{True,False\} & \quad \forall v \in V, i = 1,...,k \notag 
\end{align}
\end{subequations}

%
To encode that each vertex is also assigned to at most one color, one possible encoding is the sequential encoding \cite{SeqEncoding}, where the idea is to build a count-and-compare hardware circuit and translate it into conjunctive normal form (CNF). This encoding adds $3k-4$ clauses and $k-1$ auxiliary variables $s_{v,i},i=1,...,k-1$ per vertex $v$:
\begin{subequations}
\begin{align}
& \neg x_{v,i} \lor s_{v,i} & \forall & v \in V,i = 1,...,k-1 \label{eq:seq1}\\
& \neg s_{v,i-1} \lor s_{v,i} & \forall & v \in V,i = 2,...,k-1\label{eq:seq2}\\
& \neg x_{v,i} \lor \neg s_{v,i-1} & \forall & v \in V,i = 2,...,k-1\label{eq:seq3}\\
& \neg x_{v,k} \lor s_{v,k-1} & \forall & v \in V \label{eq:seq4}
\end{align}
\end{subequations}
We remark that enforcing each vertex to have at most one color is not strictly necessary, however, it may improve performance as it eliminates redundant solutions from the search space. In our initial experiments, only enforcing each vertex to have at least one color or using the standard binomial encoding for the at-least-1 constraints showed subpar performance.

Translating the symmetry-breaking constraints (\ref{eq:Asymm_ASS1})-(\ref{eq:Asymm_ASS2}) adds the following clauses:
\begin{subequations}
\begin{align}
\neg x_{v,i} &&\forall i > v, v \in 1,...,k \label{eq:Asymm_ASS1-S}\\
\neg x_{v,i} \lor \bigvee_{u = i-1}^{v-1}x_{u,i-1} &&\forall v\in V\setminus \{1,|V|\},i=2,...,k\label{eq:Asymm_ASS2-S}
\end{align}
\end{subequations}
similar symmetry breaking was also used in \cite{CLICOLCOM,Gelder}.

The SAT encoding of the assignment model, strengthened with these symmetry-breaking constraints, has the following form:

\begin{center}
\gcpasssat:\quad consists of clauses (\ref{ASS-SAT:a}), (\ref{ASS-SAT:b}), 
(\ref{eq:seq1})--(\ref{eq:seq4}),
(\ref{eq:Asymm_ASS1-S}), and (\ref{eq:Asymm_ASS2-S}).
\end{center}



\paragraph*{Adaptation to the bandwidth coloring model}

To extend the previous SAT formulation into a formulation for the bandwidth coloring problem, one can modify the edge clauses (\ref{ASS-SAT:b}) analogous to the ILP model (\bcpassilp). 
The new edge clauses are:
\begin{align}
&\neg x_{u,i} \lor \neg x_{v,j} &\forall & e=\{u,v\} \in E, \forall i,j = 1,...,k \text{ with } |i-j| < d(e)  \label{ASS-SAT-BCP:edge}
\end{align}

The SAT encoding of the assignment model for the BCP has the following form:

\begin{center}
\bcpasssat:\quad consists of clauses (\ref{ASS-SAT:a}), 
(\ref{eq:seq1})--(\ref{eq:seq4}), and
(\ref{ASS-SAT-BCP:edge}).
\end{center}

%
%
%

\section{Formulations based on the partial-ordering approach}

Here, we revisit the SAT encoding suggested by~\cite{Tamura2009} and \cite{Ansotegui2004} for the GCP which also can be seen as the SAT-counterpart to the partial-ordering based ILP model.
We suggest a modification of the symmetry-breaking constraints used in \cite{VP_Asymm} for the partial-ordering based model that can be encoded into SAT in polynomial size and without adding new variables. 
Furthermore, we suggest a new hybrid version inspired by the (\gcppophilp) model.

\subsection{SAT encodings based on partial-ordering: (\gcppopsat) and (\gcppophsat)}

\begin{subequations}
\begin{align}
&             \neg y_{v,k} & \forall &v \in V \label{POP-SAT:a} \\
&          y_{v,i} \lor \neg y_{v,i+1} &\forall& v \in V,i=1,...,k-1\label{POP-SAT:b}  \\
&          y_{u,1} \lor  y_{v,1} &\forall &\{u,v\} \in E \label{POP-SAT:c}\\
&          \neg y_{u,i-1} \lor  y_{u,i} \lor \neg y_{v,i-1} \lor y_{v,i} &\forall &\{u,v\} \in E, i=2,...,k\label{POP-SAT:d}\\
&              y_{v,i} \in \{True,False\} & \forall &v \in V, i = 1,...,k \notag 
\end{align}
\end{subequations}
The clauses (\ref{POP-SAT:a}) guarantee that every vertex is at most as large as color $k$ in the partial order.
Clauses (\ref{POP-SAT:b}) ensure the transitivity of the partial order, i.e., vertex $v$ being larger than color $i$ implies that it is also larger than color $i-1$. Finally, clauses (\ref{POP-SAT:c})-(\ref{POP-SAT:d}) enforce that adjacent vertices must get a different color. In total, the model contains $k\cdot |V|$ variables and $k \cdot(|V|+|E|)$ constraints. However, one can preassign the variables according to clauses (\ref{POP-SAT:a}), reducing the number of variables to $(k-1)\cdot |V|$ and the number of clauses to $k \cdot(|V|+|E|)-|V|$.

Note that the partial-ordering based model directly encodes that each vertex is assigned to \textit{exactly} one color (in contrast, the assignment based model needs additional cardinality constraints to enforce this).
\paragraph*{Adapting symmetry-breaking constraints for the POP-Model}
The translation of the symmetry-breaking constraints (\ref{eq:weakAsym_POP}) into SAT is trivial:
\begin{align}
& \neg y_{v,v}           &&\forall v \in 1,...,k \label{eq:weakAsymm_POP-S}
\end{align}
A drawback of inequality (\ref{eq:Asymm_POP}) is that translating it into a SAT encoding is no longer straightforward.
However, we propose the following simplified inequality, that also eliminates all symmetries arising due to relabeling of the colors:
\begin{align}
y_{v,i} \leq \sum_{u = i-1}^{v-1}y_{u,i-1} &&\forall v\in V\setminus\{1,|V|\},i=2,...,H \label{eq:Asymm_POP2}
\end{align}
\begin{lemma}
Inequalities  (\ref{eq:weakAsym_POP}) and (\ref{eq:Asymm_POP2}) guarantee that for all $i=2,...,k$, the smallest vertex in color class $i$ is larger than the smallest vertex in color class $i-1$. 
\end{lemma}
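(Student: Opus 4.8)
The plan is to unpack what the two families of inequalities say in terms of the underlying coloring and then argue directly about the smallest vertex in each color class. Recall that in the partial-ordering encoding, a vertex $v$ receives color $c(v) = i$ exactly when $y_{v,i-1} - y_{v,i} = 1$ (with the convention $y_{v,0} = 1$), equivalently when $y_{v,i-1} = 1$ and $y_{v,i} = 0$; in particular $v$ has a color $> i$ iff $y_{v,i} = 1$, and $v$ has color $\le i$ iff $y_{v,i} = 0$. So the sum $\sum_{u=i-1}^{v-1} y_{u,i-1}$ counts how many of the vertices $u \in \{i-1, i-2, \dots, v-1\}$ have color at least $i$ (equivalently, color $> i-1$). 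Fix $i \in \{2,\dots,k\}$ and let $v^\star$ be the smallest vertex with $c(v^\star) = i$. The goal is to show that every vertex $u < v^\star$ with $c(u) = i-1$ exists, i.e.\ that the smallest vertex of color $i-1$ is strictly below $v^\star$.

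First I would handle the case $v^\star = 1$ separately: then $c(1) = i$, but the symmetry-breaking constraint $y_{1,1} = 0$ (the instance of (\ref{eq:weakAsym_POP}) with $v=1$) forces $c(1) = 1$, contradicting $i \ge 2$, so this case cannot occur and there is nothing to prove. Similarly, I should note that if $v^\star = |V|$ the inequality (\ref{eq:Asymm_POP2}) is not imposed for $v = |V|$, but then the claim still needs checking; here I would instead invoke that constraint (\ref{eq:weakAsym_POP}) gives $y_{v,v}=0$ for all $v$, so in particular no vertex $v$ can have color exceeding $v$, and combined with the previous reasoning one gets that the set of vertices of each color $1, 2, \dots$ is nonempty up to the largest color — I would make this precise as a short preliminary observation: an easy induction shows colors are "packed" from the bottom, so color $i-1$ is used whenever color $i$ is used.

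Now assume $2 \le v^\star \le |V|-1$ (after the edge cases, this is the substantive range, and if $v^\star = |V|$ I fall back on the packing observation). Apply (\ref{eq:Asymm_POP2}) with $v = v^\star$: since $c(v^\star) = i$ we have $y_{v^\star, i-1} = 1$, so the left-hand side is $1$, and therefore $\sum_{u=i-1}^{v^\star - 1} y_{u, i-1} \ge 1$. Hence there exists $u^\star \in \{i-1, \dots, v^\star - 1\}$ with $y_{u^\star, i-1} = 1$, i.e.\ $c(u^\star) \ge i$. Among all vertices $u < v^\star$, none has color exactly $i$ (by minimality of $v^\star$), so if I can rule out color $> i$ for the smallest such $u^\star$, I would be done — but cleaner is to iterate: take the smallest vertex $w$ with $c(w) \ge i-1$... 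Actually the crisp finish is: let $w$ be the smallest vertex with $c(w) = i-1$; I must show $w < v^\star$. Suppose not, so every vertex $u < v^\star$ has $c(u) \le i-2$, meaning $y_{u,i-1} = 0$ for all $u < v^\star$, in particular for all $u \in \{i-1,\dots,v^\star-1\}$; but this makes the right-hand side of (\ref{eq:Asymm_POP2}) equal to $0$ while the left-hand side is $1$, a contradiction. Therefore $w < v^\star$, which is exactly the statement that the smallest vertex in color class $i-1$ is larger (in index) — wait, smaller in index — than the smallest vertex in color class $i$, as claimed.

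The main obstacle I anticipate is the boundary bookkeeping: the inequality (\ref{eq:Asymm_POP2}) is deliberately not imposed for $v \in \{1, |V|\}$, so the argument above does not literally cover $v^\star \in \{1,|V|\}$, and one must lean on (\ref{eq:weakAsym_POP}) plus the "colors are packed from below" observation to patch these. The case $v^\star = 1$ is immediate from $y_{1,1}=0$; the case $v^\star = |V|$ requires noting that if color $i$ is used at all then, by the packing property, color $i-1$ is used by some vertex, and that vertex is $\ne |V| = v^\star$, hence strictly below it. Making the packing property itself rigorous (a one-line induction on colors using that each vertex gets exactly one color and $y_{v,v}=0$) is the only place where a small amount of care is needed; everything else is a direct reading-off of the inequalities.
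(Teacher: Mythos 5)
Your overall framing is fine, but the core of the argument has a genuine gap, and it is exactly the part the paper's proof is built around. In your ``crisp finish'' you negate the claim as ``every vertex $u<v^\star$ has $c(u)\le i-2$''. That is not the negation: the assumption that no vertex below $v^\star$ has color $i-1$, together with the minimality of $v^\star$ in class $i$, only yields that every $u<v^\star$ has $c(u)\le i-2$ \emph{or} $c(u)\ge i+1$. You flag this issue yourself (``if I can rule out color $>i$ for the smallest such $u^\star$, I would be done'') and then silently drop the $\ge i+1$ branch. Ruling it out is the whole difficulty: constraint (\ref{eq:Asymm_POP2}) only certifies a vertex below $v^\star$ whose color is \emph{at least} $i-1$, and nothing local forces that witness to have color exactly $i-1$. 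The paper closes this case by a maximal-counterexample (downward) induction on the color: it takes the \emph{greatest} color $i$ for which the claim fails, lets $w$ be the smallest witness with $c(w)\ge i-1$, observes that $w$ can be colored neither $i-1$ nor $i$ and hence $c(w)=i^*\ge i+1$, and shows $w$ is then the smallest vertex of color $i^*$ yet smaller than the smallest vertex of color $i^*-1$, contradicting the maximality of $i$. A direct argument like yours (or an upward induction) never sees colors above $i$ and cannot close this case; the same issue also undermines your unproved ``packing'' observation, which is essentially equivalent to the lemma rather than a one-line preliminary, and which you need for the boundary case $v^\star=|V|$.

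There is also an index error in how you instantiate (\ref{eq:Asymm_POP2}). The instance for color $i$ reads $y_{v^\star,i}\le\sum_{u=i-1}^{v^\star-1}y_{u,i-1}$, and its left-hand side is $y_{v^\star,i}=0$ when $c(v^\star)=i$, so it carries no information; the instance for color $i-1$ reads $y_{v^\star,i-1}\le\sum_{u=i-2}^{v^\star-1}y_{u,i-2}$, whose right-hand side witnesses a vertex of color $\ge i-1$, not $\ge i$ as you state. You have combined the left-hand side of one instance with the right-hand side of the other. This is repairable by using the color-$(i-1)$ instance throughout, but the resulting witness has color $\ge i-1$, which makes the missing ``$\ge i+1$'' case even more clearly unavoidable.
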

\begin{proof}
%
In case $i=2$, the claim follows directly from (\ref{eq:weakAsym_POP}).
Assume, for contradiction, $i>2$ is the greatest color such that for the smallest vertex $v$ of $i$ and the smallest vertex $u$ of $i-1$, it holds that $v < u$. 
Since we have $y_{v,i-1} = 1$, according to (\ref{eq:Asymm_POP2}) there must exist a vertex $w \in i-1,...,v-1$, such that $y_{w,i-2} = 1$. Let $w$ be the smallest of such vertices.
From $u>v$ and from the fact that $u$ and $v$ are smallest vertices of colors $i-1$ and $i$ follows that vertex $w$ cannot be colored with $i-1$ or $i$. 
So $w$ must be colored with a color $i^* \ge i+1$. 
The construction of $w$ implies that $w$ is the smallest one of the vertices with colors $i,i+1,...,i^*$.
It follows that $w$ of color $i^*$ is smaller than the smallest vertex of color $i^*-1$. 
This contradicts our assumption that $i$ is the greatest color such that the smallest vertex of $i$ is smaller than the smallest vertex of color $i-1$.
\end{proof}
The advantage of (\ref{eq:Asymm_POP2}) over the naive adaptation is that it can easily be encoded as a set of logical clauses:
\begin{align}
&\neg y_{v,i} \lor \bigvee_{u=i-1}^{v-1} y_{u,i-1} & \forall  &v\in V\setminus\{1,|V|\},i=2,...,k
\label{POP-SAT:asym}
\end{align}

The SAT encoding based on partial-ordering for the GCP has the following form:
\begin{center}
\gcppopsat:\quad consists of clauses  (\ref{POP-SAT:a})--(\ref{POP-SAT:d}), (\ref{eq:weakAsymm_POP-S}) and (\ref{POP-SAT:asym}).
\end{center}

\subsubsection{Hybrid partial-ordering based SAT encoding for the GCP}
One can also encode the hybrid partial-ordering based model as SAT. The clauses corresponding to (\ref{POP-ASS1})-(\ref{POP-ASS2}) are:
\begin{subequations}
\begin{align}
x_{v,1} \lor  y_{v,1}&&\forall v \in V  \label{POP-ASS1-S} \\
\neg x_{v,1} \lor  \neg y_{v,1}&&\forall v \in V  \label{POP-ASS2-S} \\
\neg x_{v,i} \lor y_{v,i-1}&&\forall v \in V, i = 2,...,k  \label{POP-ASS3-S}\\
\neg x_{v,i} \lor \neg y_{v,i} &&\forall v \in V, i = 2,...,k  \label{POP-ASS4-S}\\
x_{v,i} \lor \neg y_{v,i-1} \lor y_{v,i}&&\forall v \in V, i = 2,...,k  \label{POP-ASS5-S}
\end{align}
\end{subequations}
The SAT encoding of the hybrid partial-ordering based model for the GCP has the following form:
\begin{center}
\gcppophsat:\quad consists of clauses  (\ref{POP-SAT:a}),(\ref{POP-SAT:b}),(\ref{ASS-SAT:b}),(\ref{POP-ASS1-S})-(\ref{POP-ASS5-S}), (\ref{eq:weakAsymm_POP-S}) and (\ref{eq:Asymm_ASS2-S}).
\end{center}

\subsection{Partial-ordering based ILP models (\bcppopilp) and (\bcppophilp) for the BCP}
To adapt the partial-ordering based model to the bandwidth coloring problem, one could follow the same approach that was used for the assignment model and use a constraint for every edge $e$ and every pair of colors $i,j$ with $|i-j| < d(e)$. 
However, we suggest an alternative approach, which takes advantage of the fact that the partial-ordering based model orders the vertices with respect to the colors to design a more efficient encoding. The idea of our approach is that the constraint $|c(u)-c(v)|\geq d(e)$ can equivalently be encoded as $c(u) \leq c(v)-d(e) \lor c(u) \geq c(v)+d(e)$, intuitively speaking, the color of $v$ must be at least $d(e)$ greater or less than the color of $u$. This directly leads to the following implication:
\begin{equation*}
c(u) = i \Rightarrow c(v) \leq i-d(e) \lor c(v) \geq i+d(e)
\end{equation*}
By definition, it holds that:
\begin{align*}
c(u) = i &\Leftrightarrow y_{u,i-1} - y_{u,i} = 1\\
c(v) \leq i &\Leftrightarrow y_{v,i} = 0\\
c(v) \geq i &\Leftrightarrow y_{v,i-1} = 1\\
\end{align*}
For the sake of convenience, we define $y_{v,i}:= 1$ for $i < 1$ and $y_{v,i}:= 0$ for $i > H$.
Substituting the terms from the previous implication gives the following constraints:
\begin{align}
y_{u,i-1} - y_{u,i} + y_{v,i-d(e)} - y_{v,i+d(e)-1} \leq 1 && \forall  e=\{u,v\} \in E , i=1,...,H. \label{POPBCP-ILP}
\end{align}
Our new partial-ordering ILP model for the bandwidth coloring problem has the following form:
\begin{align*}
\bcppopilp:\quad 
\min\Big\{1 + \sum_{i = 1}^H y_{q,i} \colon y \text{ satisfy 
(\ref{POP-ILP:a}), (\ref{POP-ILP:b}), (\ref{POPBCP-ILP}), (\ref{POP-ILP:e}), (\ref{POP-ILP:f})} \Big\}.
\end{align*}


\begin{observation}
By Lemma \ref{lemma:ASS-ILP-Size}, the number of constraints in the assignment model 
(\bcpassilp) is
$(H+1) \cdot |V| +H\cdot|E|(2\bar{d}-1) - \sum_{e \in E} \big(d(e)^2-d(e)\big) \overset{H \gg \bar{d}} = \mathcal{O}\big( H\cdot|E|(2\bar{d}-1)\big)$ 
and thus depends on both $\bar d$ and $H$.
In contrast, the number of constraints in the partial-ordering based model (\bcppopilp) is in the order of 
$\mathcal{O}(H\cdot|E|)$ (by straightforward counting), and thus depends only indirectly on the edge weights (via $H$). This gives a size reduction in the order of $\mathcal{O}(\bar{d})$.
%
%
This fact applies analogously to the corresponding SAT encodings.
\end{observation}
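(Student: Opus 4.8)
The statement is a size comparison, so the plan is purely bookkeeping: reduce the assignment bound of Lemma~\ref{lemma:ASS-ILP-Size} to a clean $\Theta$, count the constraints and variables of \bcppopilp{} directly, take the ratio, and then repeat the count for the two SAT encodings. For \bcpassilp{} I would start from the exact count in Lemma~\ref{lemma:ASS-ILP-Size} and extract its order of magnitude using one elementary observation: in any feasible BCP instance every edge $e$ forces its endpoints to differ by at least $d(e)\ge 1$, so the optimum, and hence $H$, is at least $\max_{e\in E}d(e)+1$. Consequently, for every edge $e$ and every color $i\in\{1,\dots,H\}$, the truncated set $\{\,j\in\{1,\dots,H\}:|i-j|<d(e)\,\}$ has between $d(e)$ and $2d(e)-1$ elements (the extremes occurring at the boundary colors $i=1,H$ resp.\ the interior). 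Summing over $i$ and $e$, the number of constraints of type (\ref{ASS-BW-ILP:b}) lies in $[\,H|E|\bar d,\ 2H|E|\bar d\,]$, i.e.\ is $\Theta(H|E|\bar d)$ — this is exactly the cancellation already carried out in Lemma~\ref{lemma:ASS-ILP-Size}, read asymptotically. The remaining $(H+1)|V|$ constraints (\ref{ASS-BW-ILP:a}) and (\ref{ASS-BW-ILP:c}) are $\mathcal{O}(H|V|)$, so \bcpassilp{} has $\Theta\big(H(|V|+|E|\bar d)\big)$ constraints and $H|V|+1$ variables.

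Next I would count \bcppopilp{} constraint family by constraint family: (\ref{POP-ILP:a}) gives $\mathcal{O}(|V|)$ constraints, (\ref{POP-ILP:b}) gives $\mathcal{O}(H|V|)$, the new edge constraints (\ref{POPBCP-ILP}) give exactly $H|E|$ — one per edge and per color $i$, independent of $d(e)$, thanks to the padding convention $y_{v,i}:=1$ for $i<1$ and $y_{v,i}:=0$ for $i>H$ — and (\ref{POP-ILP:e}) gives $\mathcal{O}(H|V|)$. Hence \bcppopilp{} has $\mathcal{O}\big(H(|V|+|E|)\big)$ constraints and, like \bcpassilp{}, $\Theta(H|V|)$ variables. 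Dividing the two constraint counts yields a reduction factor that is $\mathcal{O}(\bar d)$ in general and $\Theta(\bar d)$ whenever $|E|=\Omega(|V|)$; the salient point is that $\bar d$, and more generally the edge weights, no longer enter the constraint count except through the value of $H$.

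Finally, the SAT claim follows from the same counting. The bandwidth edge clauses (\ref{ASS-SAT-BCP:edge}) of \bcpasssat{} are in bijection with the ILP constraints (\ref{ASS-BW-ILP:b}), hence number $\Theta(H|E|\bar d)$, while the sequential at-most-one clauses (\ref{eq:seq1})--(\ref{eq:seq4}) add only $\mathcal{O}(H|V|)$ clauses and auxiliary variables. For the partial-ordering SAT encoding, each ILP constraint (\ref{POPBCP-ILP}) becomes the single width-four clause $\neg y_{u,i-1}\lor y_{u,i}\lor\neg y_{v,i-d(e)}\lor y_{v,i+d(e)-1}$, so the edge clauses still number $H|E|$; and no cardinality clauses are needed, since the chain clauses (\ref{POP-SAT:b}) already force each vertex to take exactly one color. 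Thus the two clause counts are $\Theta(H|E|\bar d)$ versus $\mathcal{O}(H(|V|+|E|))$, and the ratio is again $\Theta(\bar d)$.

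There is no genuine obstacle here — the argument is elementary counting. Only two points need a line of care: (a) the truncation of the color-pair intervals at the ends $j<1$ and $j>H$, which is handled exactly by Lemma~\ref{lemma:ASS-ILP-Size} and, for the asymptotics, by the $[d(e),2d(e)-1]$ bracketing above; and (b) the mild hypothesis $|V|=\mathcal{O}(|E|)$ (equivalently, deleting the coloring-irrelevant isolated vertices) needed so that the $\mathcal{O}(H|V|)$ terms do not swamp the $\Theta(H|E|\bar d)$ edge term in the lower bound for \bcpassilp{}; without it one simply states the bounds as $\Theta(H(|V|+|E|\bar d))$ versus $\Theta(H(|V|+|E|))$, and the $\Theta(\bar d)$ gap persists on instances with $|E|=\Omega(|V|)$.
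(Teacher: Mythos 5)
Your proposal is correct and follows essentially the same route as the paper: the assignment-model count is read off from Lemma~\ref{lemma:ASS-ILP-Size} and the partial-ordering count is obtained by direct enumeration of the constraint families, with the SAT clause counts in bijection with the ILP constraints. Your additions — the $[\,H|E|\bar d,\ 2H|E|\bar d\,]$ bracketing (valid since feasibility forces $H>\max_e d(e)$) and the explicit $|V|=\mathcal{O}(|E|)$ caveat — only make precise the asymptotics the paper states informally.
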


\subsubsection{Hybrid partial-ordering ILP model for the BCP}
Analogous to the ILP models for the GCP, one can also formulate a hybrid partial-ordering based model for the BCP having less nonzero terms in the edge constraints than the regular partial-ordering based model. The edge constraints for this model are:
\begin{align}
x_{ui} + y_{v,i-d(e)} - y_{v,i+d(e)-1} \leq 1 && \forall  e=\{u,v\} \in E , i=1,...,H. \label{POPHBCP-ILP}
\end{align}
The model then has the following form:
\begin{align*}
\bcppophilp:\quad 
\min\Big\{1 + \sum_{i = 1}^H y_{q,i} \colon x,y \text{ satisfy  (\ref{ASS-ILP:c}),
(\ref{POP-ILP:a}), (\ref{POP-ILP:b}), 
(\ref{POP-ILP:e})--(\ref{POP-ILP:f}), 
(\ref{POP-ASS1}), (\ref{POP-ASS2}), (\ref{POPHBCP-ILP})} \Big\}.
\end{align*}

\subsection{SAT encodings (\bcppopsat) and (\bcppophsat) based on partial-ordering for the BCP}
The ILP formulations introduced in the previous section can easily be translated into SAT encodings.
For the sake of convenience, we define $y_{v,i}:= True$ for $i < 1$ and $y_{v,i}:= False$ for $i > k$.
The clauses corresponding to constraints (\ref{POPBCP-ILP}) are:
\begin{align}
\neg y_{u,i-1} \lor y_{u,i} \lor \neg y_{v,i-d(e)} \lor y_{v,i+d(e)-1} && \forall  e=\{u,v\} \in E , i=1,...,k. \label{POPBCP-SAT}
\end{align}
which gives us the encoding \bcppopsat as follows:
\begin{center}
\bcppopsat:\quad consists of clauses  (\ref{POP-SAT:a}),(\ref{POP-SAT:b}), and (\ref{POPBCP-SAT}).
\end{center}
\subsubsection{Hybrid partial-ordering SAT encoding for the BCP}
The clauses corresponding to constraints (\ref{POPBCP-ILP}) are:
\begin{align}
\neg x_{u,i} \lor \neg y_{v,i-d(e)} \lor y_{v,i+d(e)-1} && \forall  e=\{u,v\} \in E , i=1,...,k. \label{POPHBCP-SAT}
\end{align}
which gives us the encoding \bcppophsat  as follows:
\begin{center}
\bcppophsat:\quad consists of clauses  (\ref{POP-SAT:a}),(\ref{POP-SAT:b}), (\ref{POP-ASS1-S})--(\ref{POP-ASS5-S}) and (\ref{POPHBCP-SAT}).
\end{center}
\section{Experimental evaluation}
In our computational experiments, we evaluated the effectiveness of the partial-ordering based SAT encodings and compared them with state-of-the art approaches.
In particular, we were interested in a comparison of the partial-ordering based encoding with the assignment based SAT encoding (i.e., the basic SAT encoding) as well as the ILP formulations of the assignment and the partial-ordering based models.
Moreover, we compared the models to state-of-the-art approaches.
The implementation and the data is publically available on \url{https://github.com/s6dafabe/popsatgcpbcp}.
\subsection{Implementation details}
We used the standard preprocessing techniques for graph coloring instances also used in \cite{Gelder,POP}:
\begin{enumerate}[i]
    \item  A vertex $u$ is dominated by a vertex $v$, $v \neq u$, if the neighborhood of $u$ is a subset of the neighborhood of $v$. In this case, the vertex $u$ can be deleted from $G$, the remaining graph can be colored, and at the end, $u$ can get the same color as $v$.
    \item If a vertex $v$ has a degree of less than $L$, where $L$ is a lower bound on the chromatic number, then $v$ can be deleted from $G$ for the calculations. At the end, after the remaining graph has been colored, there is at least one used color left to color $v$ that is not assigned to any of the neighbors of $v$.
    \item Any clique $Q$ represents a lower bound, so one can precolor the vertices in a clique with colors $1,...,|Q|$, eliminating some of the variables. To fix as many variables as possible, one tries to find a clique $Q$ of maximum size.
\end{enumerate}
To reduce the graph as much as possible, we use reductions (i) and (ii) alternatingly until the graph cannot be reduced further. To compute the clique for (iii), we apply the randomised function \texttt{ networkx.maximal\_independent\_set()} on the complement graph of G and choose the best clique out of $300\cdot\frac{|E|}{|V|}$ iterations. 
Another refinement we use is that of all the largest cliques found, we use the one that has the largest cut. 
The motivation behind this is that precoloring a vertex $v$ with a color $i$ also fixes some variables of their neighbors, as it excludes coloring the neighbors of $v$ with $i$. 
As the clique finding method is time consuming for large graphs, we limit the time for finding a clique to $100s$, after which we use the best clique found so far. 

Because SAT is a decision problem, we need to solve a series of $k$-colorability problems to find the chromatic number of a graph. We found that using ascending linear search, i.e.\ starting from a lower bound $L(G)$ and testing satisfiability for $k = L(G),L(G)+1,...,\chi(G)$ until the first satisfiable value for $k$ is found, leads to the best results for the graph coloring problem. For the lower bound $L(G)$, we use the size of the clique found in preprocessing step (3). For the bandwidth coloring problem, we found that descending linear search, i.e.\ testing $k = H(G),H(G)-1,...,\chi(G)$ leads to the best result. To compute an upper bound $H(G)$ for the optimal value for the BCP, we use a simple greedy algorithm:
In every iteration we select the vertex that has not yet been assigned a color and has the highest degree. We then assign the vertex to the smallest color that does not conflict with the colors of any neighbouring vertices that have already been colored. 

Note that we omitted the preprocessing steps (i)-(iii) for the BCP, as they are not applicable for this problem: Because of the distance constraints, swapping the indices between colors may invalidate a coloring, therefore these colorings are not equivalent anymore. Similarly, fixing the colors of vertices in a clique may lead to the optimal solution being excluded. 

%
\subsection{Test setup and benchmark set}
To solve the SAT encodings, we used the solver~\texttt{kissat 3.1.1}\footnote{https://github.com/arminbiere/kissat/releases/tag/rel-3.1.1, accessed 11.03.2024}, which has been successful in the 2022 SAT competition. The preprocessing and the generation of the SAT and ILP formulations were implemented in \texttt{Python 3.10} using the library \texttt{networkx 2.8.5}\footnote{\url{https://networkx.org/documentation/stable/release/release_2.8.5.html}}. 
For solving the ILP models, we used \texttt{Gurobi 10.0.2} single-threadedly. The machine used to evaluate the SAT and ILP formulations features an Intel Xeon Gold 6130@2.1GHz running CentOS Linux and 187 GB of memory (Benchmarks \cite{Benchmark} user time: r500.5=4.87s). For comparison, we compiled the implementation of Heule, Karahalios and Hoeve \cite{CLICOLCOM} and tested it on the same machine. We also compiled the implementation of Held, Cook and Sewell~\cite{heldcolor} with the same Gurobi version. Because their implementation\footnote{https://github.com/heldstephan/exactcolors, accessed 11.03.2024} uses features of Gurobi that are incompatible with the first machine, this method had to be evaluated on a different machine having an AMD EPYC 7543P@2.8GHz and 257GB memory (Benchmarks \cite{Benchmark} user time: r500.5=3.24s). For the graph coloring problem, we performed our experiments on a set of 134 DIMACS graphs \cite{dimacs_graphs} and additionally a set of 9 randomly generated instances by Michael Trick (the \texttt{R}-instances: Note that there exist 18 instances in total, however the instances are duplicated and the duplicates only differ in the node weights, which are irrelevant for standard graph coloring). Furthermore, we compare with the results reported in \cite{CLICOLCOM} of the method presented in \cite{Zykov_Coloring}. For similar reasons as reported in Heule, Karahalios and van Hoeve \cite{CLICOLCOM}, we did not compare to the work in~\cite{CEGAR}, which claims strong results for graph coloring with a method using a relaxed Zykov encoding that is incrementally strengthened. 
The linked source code is currently incorrect, and the authors were unable to reproduce the results.
For the bandwidth coloring problem, we used the \texttt{GEOM} set consisting of 33 graphs generated by Michael Trick \cite{dimacs_graphs}.
We have set a time limit of 1 hour.
\subsection{Experimental results for the graph coloring problem}
\begin{center}
\begin{table}[htb]
\def\Bold#1{\textbf{#1}} 
\small
\setlength{\tabcolsep}{1.5pt} 
\setlength{\extrarowheight}{0.8pt}
\begin{tabular}{lccccccccc}
\toprule
set & \gcpasssat & \gcppopsat & \gcppophsat & \gcpassilp & \gcppopilp & \gcppophilp & EC\cite{heldcolor}  &CLICOL \cite{CLICOLCOM} & CDCL\cite{Zykov_Coloring,CLICOLCOM}~\tablefootnote{The code provided in the repository produced compile errors on our system, so we used the results of the experiments from \cite{CLICOLCOM} which did not contain the \texttt{R}-instances} \\
\midrule
DSJ         & 2 & 2 & 2 & 3 & 2 & 3 & 5 & 4 & 2\\
FullIns    & 14 & 14 & 14 & 12 & 11 & 12 & 5 & 14 & 14\\
Insertions  & 4 & 4 & 4 & 4 & 4 & 4 & 1 & 4 & 3\\
abb313GPI   & 1 & 1 & 1 & 0 & 0 & 0 & 0 & 0& 0\\
anna        & 1 & 1 & 1 & 1 & 1 & 1 & 1 & 1& 1\\
ash        & 3 & 3 & 3 & 3 & 3 & 3 & 0 & 3& 3\\
david       & 1 & 1 & 1 & 1 & 1 & 1 & 1 & 1& 1\\
flat        & 0 & 0 & 0& 0 & 0 & 0 & 2 & 0& 0\\
fpsol2      & 3 & 3 & 3 & 3 & 1 & 3 & 3  &3&3\\
games120   & 1 & 1 & 1 & 1 & 1 & 1 & 1 & 1& 1\\
homer       & 1 & 1 & 1 & 1 & 1 & 1 & 1 & 1& 1\\
huck       & 1 & 1 & 1 & 1 & 1 & 1 & 1& 1 & 1\\
inithx      & 3 & 3 & 3 & 3 & 1& 3 & 3 &3& 3\\
jean        & 1 & 1 & 1 & 1 & 1 & 1 & 1 &1& 1\\
latin\_square  & 0 & 0& 0 & 0& 0 & 0 & 0 &0&0\\
le450       & 8& 8 & 8 & 8 & 7 & 8 & 3&10&8 \\
miles        & 5 & 5 & 5 & 5 & 5 & 5 & 5 & 5&5\\
mug          & 4 & 4 & 4 & 4 & 4 & 4 & 4 & 4&4\\
mulsol       & 5 & 5 & 5 & 5 & 4 & 5 & 5 & 5&5\\
myciel       & 4 & 4 & 4 & 3 & 3 & 3& 2 & 4&4\\
r           & 8 & 7 & 7 & 7 & 5 & 7 & 7 &7&7\\
qg         & 3 & 3 & 3 & 1& 0 & 2 & 0 & 3&3\\
queen        & 6 & \Bold{8} & \Bold{8} & 6 & 5 & 6 &7&5 & 6\\
school1     & 2 & 2 & 2 & 2 & 2 & 2 & 1 & 2&1\\
wap0         & 3 & \Bold{5} & \Bold{5} & 0 & 0 & 1 & 1 &4& 1\\
will199GPI  & 1 & 1 & 1 & 1 & 1 & 1 & 1 & 1&1\\
zeroin  & 3 & 3 & 3 & 3 & 3 & 3 & 3 & 3&3\\
\bottomrule
total & 88 & \Bold{91}&\Bold{91}&79&67&81& 64 &90& 83
\end{tabular}
\caption{Number of solved DIMACS instances on the benchmark set for the GCP}
\label{tab:GCPSolved}
\end{table}
\begin{table}[htb]
\def\Bold#1{\textbf{#1}} 
\small
\setlength{\tabcolsep}{1.5pt} 
\setlength{\extrarowheight}{0.8pt}
\begin{tabular}{lccccccccc}
\toprule
set & \gcpasssat & \gcppopsat & \gcppophsat & \gcpassilp & \gcppopilp & \gcppophilp & EC\cite{heldcolor}  &CLICOL \cite{CLICOLCOM} \\
\midrule
R50             & 3 & 3 & 3 & 3 & 3 & 3 & 3& 3 \\
R75             & 2 & 2 & 2 & 2 & 1 & 2 & 3& 3 \\
R100            & 2 & 2 & 2 & 2 & 1 & 2 & 3& 2 \\
\bottomrule
total & 7 & 7&7&7&5& 7 &9& 8
\end{tabular}
\caption{Number of solved instances on the \texttt{R}-instances}
\label{tab:RSolved}
\end{table}
\end{center}
Table \ref{tab:GCPSolved} shows the number of solved instances for the 134 evaluated DIMACS instances for the SAT encodings of the assignment (\gcpasssat), the partial-ordering (\gcppopsat), and the hybrid partial-ordering (\gcppophsat) based models, the corresponding ILP formulations (\gcpassilp), (\gcppopilp), (\gcppophilp), the method by  Held, Cook, and Sewel~\cite{heldcolor} (\texttt{EC}), the method of Heule, Karahalios and van Hoeve \cite{CLICOLCOM} (\texttt{CLICOL}) and the results of the method of Hebrard and Katsirelos~\cite{Zykov_Coloring} (\texttt{CDCL}) as reported in \cite{CLICOLCOM}. The first column of the table describes the class type and the subsequent columns show the number of solved instances for each model, out of the total number of tested instances.
Table \ref{tab:RSolved} shows the number of solved instances for the 9 \texttt{R}-instances.
Unfortunatly, the code provided ~\cite{Zykov_Coloring} did not compile on our machine, therefore we used the results of the experiments performed in \cite{CLICOLCOM} for the algorithm \texttt{CDCL}. Note that the authors in \cite{CLICOLCOM} did not evaluate on the \texttt{R} instances, which is why they are missing in table \ref{tab:RSolved}.
Note that for the ILP models, the large instances \texttt{DSJC1000.9}, \texttt{latin\_square\_10} and \texttt{qg.order100} resulted in out-of-memory exceptions. Also, the algorithm \texttt{CLICOL} does not seem to be robust for large instances, as it produced runtime errors for the instances \texttt{r1000.5}, \texttt{latin\_square\_10} and \texttt{wap04a}.
The bold items in the table highlight the instance types for which the POP encodings solved more instances than the other methods.

We can see that (\gcppopsat), (\gcppophsat) solved the most DIMACS instances (91/134), followed closely by \texttt{CLICOL} (90/134).
Furthermore, the POP based SAT encodings were able to solve three more instances than (\gcpasssat), and there was only one instance that was solved by (\gcpasssat) but not by the POP based SAT encodings. The POP encodings performed especially well on the \texttt{wap0}-class, as they solved the instances \texttt{wap01a}, \texttt{wap02a} and \texttt{wap08a}, which were only closed recently \cite{CLICOLCOM}.

For the \texttt{R}-instances, we can see that all four SAT methods perform similarly, with \texttt{CLICOL} slightly outperforming the three simple encodings. (\gcpassilp) and (\gcppophilp) also solve the same amount of instances as the three evaluated SAT encodings. \texttt{EC} performs the best, solving all instances, while (\gcppopilp) performs the worst, which is to be expected, as the instance set contains many instances with medium and high density, for which \texttt{EC} typically performs well and (\gcppopilp) typically performs poorly.

It can be observed that the SAT encodings generally outperform  the ILP formulations:
All of the three evaluated SAT encodings solve more instances than all three ILP formulations, even though the underlying models are very similiar. A reason for this observed behaviour is that the LP-relaxations of the assignment and the partial-ordering based models are very weak, which in turn causes the lower bounds derived from the LP-relaxations to be weak. ILP solvers spend a lot of time calculating LP-relaxations during branching to bound the search tree, however, as argued before this technique is not effective for these particular formulations. On the other hand, the clause-learning methods employed by modern SAT-Solvers may work better in this context because they do not rely on the strength of the LP-Relaxation. 

\begin{figure}[tbhp]
  \centering
  \includegraphics[width=1\textwidth]{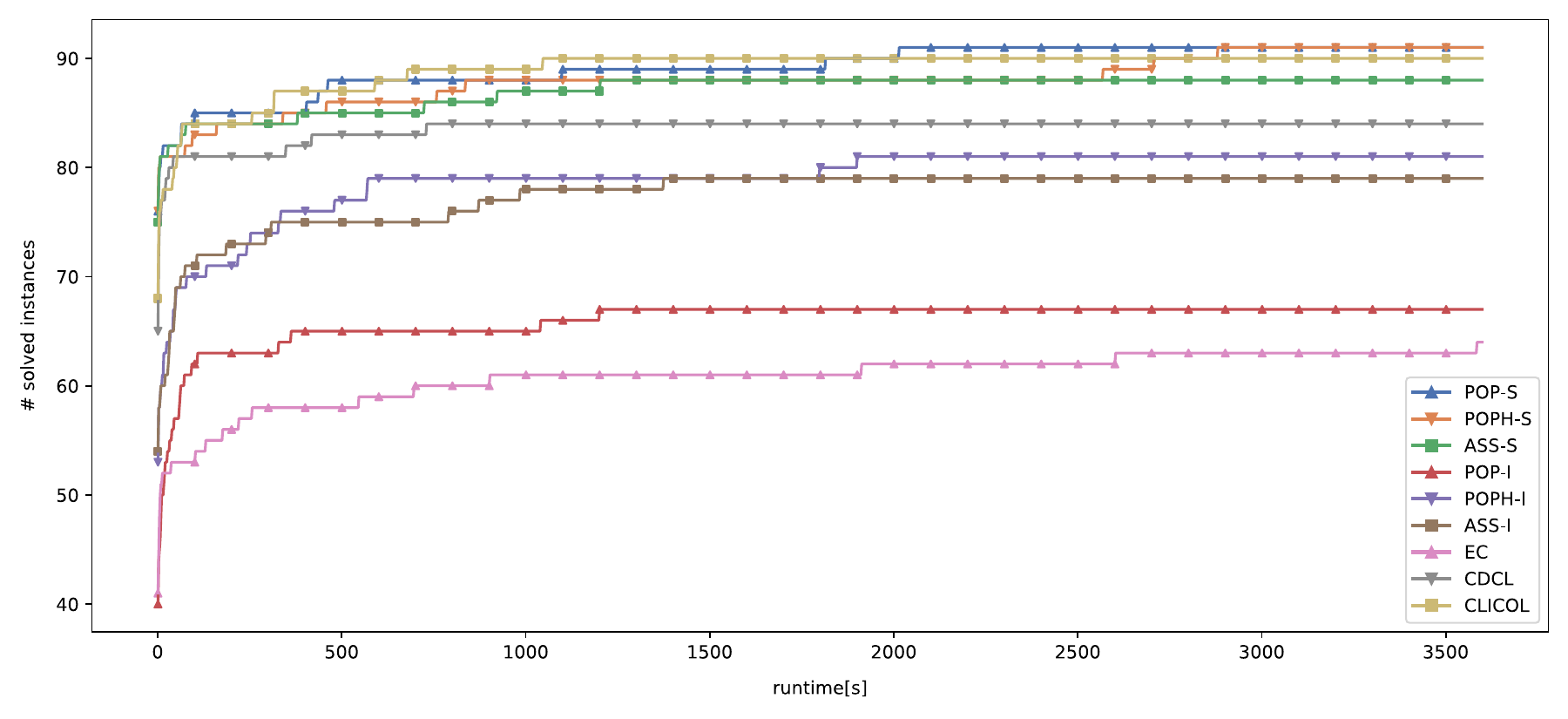}
  \caption{Number of DIMACS instances solved within a given runtime for the GCP}
    \label{fig:Survival_GCP}
\end{figure}

Figure~\ref{fig:Survival_GCP} visualizes
for each model the number of instances, which can be
solved within a time limit of 1, 2,..., 3600 seconds. We omitted the $\texttt{R}$ instances in this figure for better comparability.
We can see that (\gcppopsat) and (\gcppophsat) solve more instances than (\gcpasssat) independent of the considered time limit. Generally, (\gcppopsat), (\gcppophsat) and \texttt{CLICOL} are the best approaches and perform similarly. An interesting observation is that for the ILP formulations, the POP formulation performs far worse than the other two formulations (\gcpassilp) and (\gcppophilp), while for the SAT encodings, (\gcppopsat) and (\gcppophsat) show almost identical performance (with (\gcppopsat) even being slightly better). Jabrayilov and Mutzel \cite{POP} argued that one weakness of the POP ILP formulaton lies in the denser constraint matrix, which is caused by the POP model containing 4 variables in the constraints enforcing differing colors for connected vertices. However, this does not seem to impact the performance of the SAT encoding.

In total (for DIMACS and \texttt{R}-instances combined), (\gcppopsat), (\gcppophsat) and \texttt{CLICOL} solved the most instances (98/143). One interesting thing to note is that although (\gcppopsat)/(\gcppophsat) and \texttt{CLICOL} solved the same number of instances, they solved a different set of instances. For example (\gcppopsat)/(\gcppophsat) is particularly advantageous on the \texttt{queen} instances, while \texttt{CLICOL} shows superior performance on the \texttt{le450} instances. We want to remark that the \texttt{CLICOL} approach uses the assignment-based SAT encoding as a sub-algorithm and combines it with a more sophisticated method of finding an initial clique used for precoloring. An interesting idea could be to use the partial-ordering based SAT encoding in the \texttt{CLICOL} framework to try and combine the advantages of both methods.  

\FloatBarrier

\subsection{Experimental results for the bandwidth coloring problem}


\begin{table} [htbp]
\small
\setlength{\tabcolsep}{3.3pt} 
\begin{tabular}{lrrrrrrrr}
\toprule
set & \#inst. & \bcpasssat & \bcppopsat & \bcppophsat & \bcpassilp & \bcppopilp & \bcppophilp & DFMM\cite{BW_ILP}~\tablefootnote{The authors did not provide the code, so we used the results as reported in \cite{BW_ILP}. We remark that the used time limit in the paper was 24 hour compared to 1 hour in our experiments}\\
\midrule
GEOM{[20-50]}      &  4  &  4  &  4           &  4           &  4  &  4  &  4  &  4  \\
GEOM{[20a-50a]}     &  4  &  4  &  4           &  4           &  4  &  4  &  3  &  4  \\
GEOM{[20b-50b]}     &  4  &  4  &  4           &  4           &  4  &  4  &  4  &  4  \\
GEOM{[60-90]}       &  4  &  4  &  4           &  4           &  4  &  4  &  4  &  4  \\
GEOM{[60a-90a]}     &  4  &  4  &  4           &  4           &  1  &  1  &  0  &  4  \\
GEOM{[60b-90b]}     &  4  &  3  &  \textbf{4}  &  \textbf{4}  &  0  &  0  &  1  &  3  \\
GEOM{[100-120]}     &  3  &  3  &  3           &  3           &  3  &  0  &  0  &  1  \\
GEOM{[100a-120a]}   &  3  &  0  &  \textbf{3}  &  \textbf{3}  &  0  &  0  &  0  &  1  \\
GEOM{[100b-120b]}   &  3  &  0  &  \textbf{2}  &  \textbf{2}  &  0  &  0  &  0  &  1  \\
\bottomrule
\#solved &33 & 26& \textbf{32} & \textbf{32}   &20            &17  &16  & 26 
\end{tabular}
\caption{Number of solved GEOM instances for the BCP}
\label{tab:BCPSolved}
\end{table}
Table \ref{tab:BCPSolved} shows the number of solved instances for the 33 evaluated bandwidth coloring instances for the SAT encodings of the assignment (\bcpasssat), the partial-ordering (\bcppopsat), the hybrid partial-ordering (\bcppophsat) based models, the corresponding ILP formulations (\bcpassilp), (\bcppopilp), (\bcppophilp), and the constraint programming results of the method by 
Dias et al.~\cite{BW_ILP} from the literature. Note that the ILP formulation of the assignment model is equivalent to the model MinGEQ-CDGP-IP used in~\cite{BW_ILP}. 

We can see that (\bcppopsat) and (\bcppophsat) solve the most instances, followed by (\bcpasssat) and the constraint programming approach used in \cite{BW_ILP}. Note that the time limit used in \cite{BW_ILP} is 24 hours compared to just 1 hour in our experiments.
Interestingly, one can observe an opposite trend for the ILP formulations, where the POP formulations are weaker than the assignment formulations. This may be caused by the denser constraint matrices of the POP formulations as argued before, which do not have an impact on the performance of the SAT encodings. 

\begin{figure}[htbp]
  \centering
  \includegraphics[width=1\textwidth]{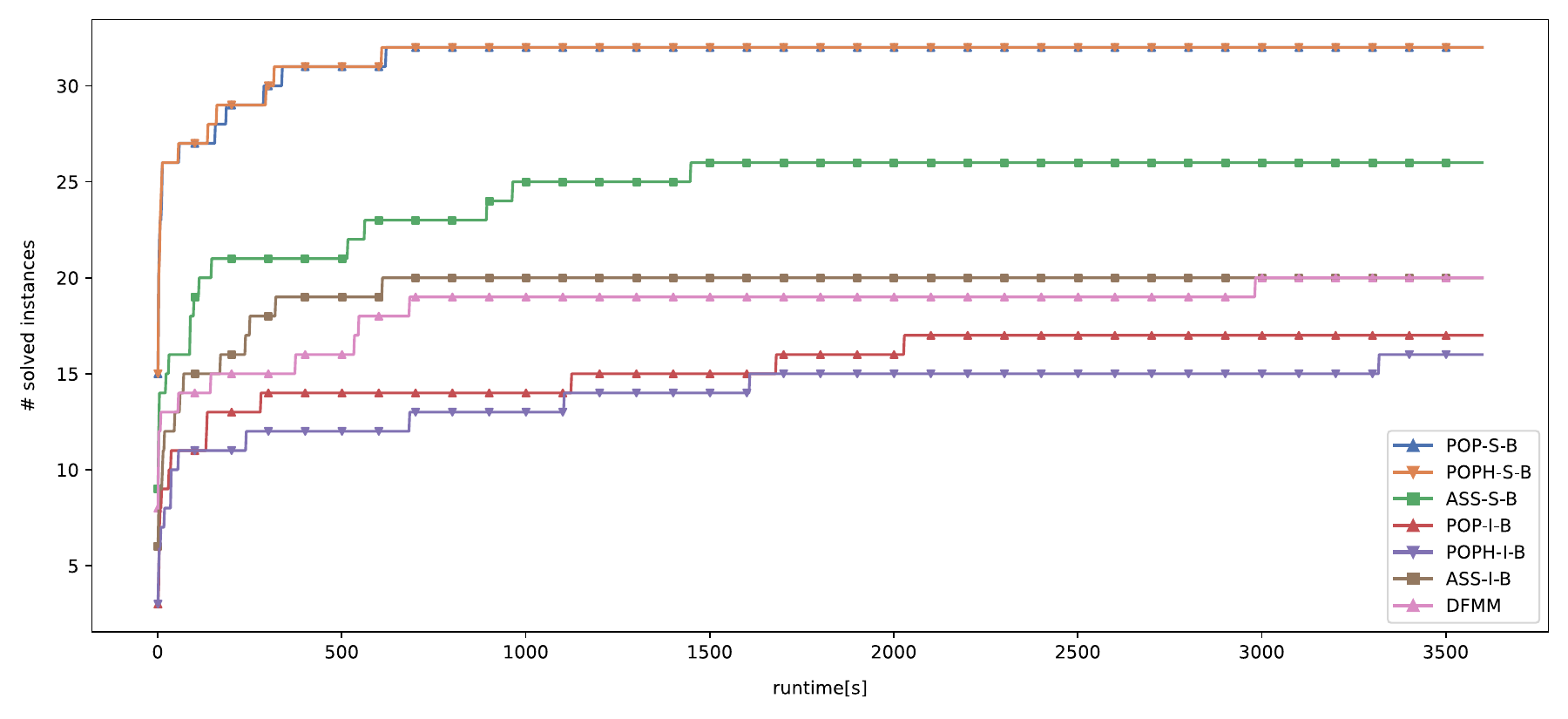}
  \caption{Number of GEOM instances solved within a given runtime for the BCP}
    \label{fig:Survival_BCP}
\end{figure}

Figure~\ref{fig:Survival_BCP} shows the number of solved instances within a time limit of 1, 2,..., 3600 seconds.
One can see that the performance of (\bcppopsat) and (\bcppophsat) is nearly identical and that the two models dominate the other approaches. In particular, the second best approaches (\bcpasssat and constraint programming) solved 26 instances in total, which (\bcppopsat) and (\bcppophsat) both solved in less than 10 seconds; after less than 700 seconds, the POP encodings solved all but one of the 33 GEOM instances to optimality. To our knowledge, this is the first time the instances \texttt{GEOM90b},  \texttt{GEOM100a}, \texttt{GEOM100b}, \texttt{GEOM110a}, \texttt{GEOM110b} and \texttt{GEOM120a} were solved to optimality. 

\section{Conclusion}
In this paper, we have revisited the partial-ordering based ILP and SAT formulations for the graph coloring problem and have suggested new models for the bandwidth coloring problem based on partial-ordering models.

Our computational study on the graph coloring problem suggests that all three SAT encodings perform similar,
with (\gcppopsat) and (\gcppophsat) solving slightly more instances (98/143) than (\gcpasssat) (95/143). This holds true for every timelimit up to 1 hour. Moreover, the SAT encodings solve more instances than the ILP formulations.
Compared to state-of-the-art approaches, the tested SAT based approaches solved more instances than the approach based on the set cover ILP formulations and have shown to be particularly advantageous for sparse graphs. Moreover, the tested SAT based encodings also solved more instances than reported in \cite{Zykov_Coloring} and the same amount of instances as \cite{CLICOLCOM}. Specifically, (\gcppopsat) and (\gcppophsat) have proven to be effective on the \texttt{wap0}- and \texttt{queen}-instances. We also remark that the partial-ordering based encodings are as easy to use as the classical assignment-based encodings. As (\gcppopsat) and (\gcppophsat) have shown superior performance compared to (\gcpasssat), an interesting line of research could therefore be to incorporate the encodings into other SAT-based frameworks, such as the method presented in \cite{CLICOLCOM}.
    
Concerning the bandwidth coloring problem, we have seen that the new POP-based SAT formulations dominate the exact state-of-the-art methods. Compared to the ILP formulations and the constraint programming approach, the SAT-encodings of the POP-based model solve the most instances by far and have a significantly lower runtime on a large part of the instances. This is consistent with the theoretical advantage of the partial-ordering based model, which has an asymptotically smaller formulation size compared to the assignment based model. 

\FloatBarrier

\section{Acknowledgment}
We are grateful for the valuable remarks of the referees, which helped to improve the paper.

\bibliography{literature}

\begin{thebibliography}{10}

\bibitem{Ansotegui2004}
Carlos Ans{\'{o}}tegui and Felip Many{\`{a}}.
\newblock Mapping problems with finite-domain variables into problems with boolean variables.
\newblock In {\em {SAT} 2004 - The Seventh International Conference on Theory and Applications of Satisfiability Testing, 10-13 May 2004, Vancouver, BC, Canada, Online Proceedings}, 2004.
\newblock URL: \url{http://www.satisfiability.org/SAT04/programme/53.pdf}.

\bibitem{Benchmark}
Benchmarking machines and testing solutions.
\newblock \url{http://mat.gsia.cmu.edu/COLOR02/BENCHMARK/benchmark.tar}, 2002.

\bibitem{RegisterColoring}
Gregory~J. Chaitin, Marc~A. Auslander, Ashok~K. Chandra, John Cocke, Martin~E. Hopkins, and Peter~W. Markstein.
\newblock Register allocation via coloring.
\newblock {\em Computer Languages}, 6(1):47--57, 1981.
\newblock URL: \url{https://www.sciencedirect.com/science/article/pii/0096055181900485}, \href {https://doi.org/10.1016/0096-0551(81)90048-5} {\path{doi:10.1016/0096-0551(81)90048-5}}.

\bibitem{ASS_SAT}
Munmun Dey and Amitava Bagchi.
\newblock Satisfiability methods for colouring graphs.
\newblock In {\em ACER 2013}, volume~3, pages 135--147, 03 2013.
\newblock \href {https://doi.org/10.5121/csit.2013.3213} {\path{doi:10.5121/csit.2013.3213}}.

\bibitem{BW_ILP}
Bruno Dias, Rosiane De~Freitas~Rodrigues, Nelson Maculan, and Philippe Michelon.
\newblock Integer and constraint programming approaches for providing optimality to the bandwidth multicoloring problem.
\newblock {\em RAIRO - Operations Research}, 55:S1949–S1967, 06 2021.
\newblock \href {https://doi.org/10.1051/ro/2020065} {\path{doi:10.1051/ro/2020065}}.

\bibitem{ColoringJacobian}
Assefaw~Hadish Gebremedhin, Fredrik Manne, and Alex Pothen.
\newblock What color is your jacobian? {G}raph coloring for computing derivatives.
\newblock {\em SIAM Review}, 47(4):629--705, 2005.
\newblock \href {https://arxiv.org/abs/https://doi.org/10.1137/S0036144504444711} {\path{arXiv:https://doi.org/10.1137/S0036144504444711}}, \href {https://doi.org/10.1137/S0036144504444711} {\path{doi:10.1137/S0036144504444711}}.

\bibitem{CEGAR}
Gael Glorian, Jean{-}Marie Lagniez, Valentin Montmirail, and Nicolas Szczepanski.
\newblock An incremental {SAT}-based approach to the graph colouring problem.
\newblock In Thomas Schiex and Simon de~Givry, editors, {\em Principles and Practice of Constraint Programming - 25th International Conference, {CP} 2019}, volume 11802 of {\em Lecture Notes in Computer Science}, pages 213--231. Springer, 2019.
\newblock \href {https://doi.org/10.1007/978-3-030-30048-7\_13} {\path{doi:10.1007/978-3-030-30048-7\_13}}.

\bibitem{Zykov_Coloring}
Emmanuel Hebrard and George Katsirelos.
\newblock Clause learning and new bounds for graph coloring.
\newblock In {\em Proceedings of the Twenty-Eighth International Joint Conference on Artificial Intelligence, {IJCAI-19}}, pages 6166--6170, 7 2019.
\newblock \href {https://doi.org/10.24963/ijcai.2019/856} {\path{doi:10.24963/ijcai.2019/856}}.

\bibitem{HebrardK20}
Emmanuel Hebrard and George Katsirelos.
\newblock Constraint and satisfiability reasoning for graph coloring.
\newblock {\em J. Artif. Intell. Res.}, 69:33--65, 2020.
\newblock URL: \url{https://doi.org/10.1613/jair.1.11313}, \href {https://doi.org/10.1613/JAIR.1.11313} {\path{doi:10.1613/JAIR.1.11313}}.

\bibitem{heldcolor}
Stephan Held, William Cook, and Edward Sewell.
\newblock Maximum-weight stable sets and safe lower bounds for graph coloring.
\newblock {\em Mathematical Programming Computation}, 4, 12 2012.
\newblock \href {https://doi.org/10.1007/s12532-012-0042-3} {\path{doi:10.1007/s12532-012-0042-3}}.

\bibitem{CLICOLCOM}
Marijn J.~H. Heule, Anthony Karahalios, and Willem-Jan van Hoeve.
\newblock {From Cliques to Colorings and Back Again}.
\newblock In Christine Solnon, editor, {\em 28th International Conference on Principles and Practice of Constraint Programming (CP 2022)}, volume 235 of {\em Leibniz International Proceedings in Informatics (LIPIcs)}, pages 26:1--26:10, Dagstuhl, Germany, 2022. Schloss Dagstuhl -- Leibniz-Zentrum f{\"u}r Informatik.
\newblock URL: \url{https://drops-dev.dagstuhl.de/entities/document/10.4230/LIPIcs.CP.2022.26}, \href {https://doi.org/10.4230/LIPIcs.CP.2022.26} {\path{doi:10.4230/LIPIcs.CP.2022.26}}.

\bibitem{ColoringSurvey2}
Thore Husfeldt.
\newblock Graph colouring algorithms, 2015.
\newblock \href {https://arxiv.org/abs/1505.05825} {\path{arXiv:1505.05825}}.

\bibitem{POP}
Adalat Jabrayilov and Petra Mutzel.
\newblock New integer linear programming models for the vertex coloring problem.
\newblock In Michael~A. Bender, Martin Farach{-}Colton, and Miguel~A. Mosteiro, editors, {\em {LATIN} 2018: Theoretical Informatics - 13th Latin American Symposium, Buenos Aires, Argentina, April 16-19, 2018, Proceedings}, volume 10807 of {\em Lecture Notes in Computer Science}, pages 640--652. Springer, 2018.
\newblock \href {https://doi.org/10.1007/978-3-319-77404-6\_47} {\path{doi:10.1007/978-3-319-77404-6\_47}}.

\bibitem{JabrayilovM22}
Adalat Jabrayilov and Petra Mutzel.
\newblock Strengthened partial-ordering based {ILP} models for the vertex coloring problem.
\newblock {\em CoRR}, abs/2206.13678, 2022.
\newblock URL: \url{https://doi.org/10.48550/arXiv.2206.13678}, \href {https://arxiv.org/abs/2206.13678} {\path{arXiv:2206.13678}}, \href {https://doi.org/10.48550/ARXIV.2206.13678} {\path{doi:10.48550/ARXIV.2206.13678}}.

\bibitem{Leighton1979AGC}
Frank~Thomson Leighton.
\newblock A graph coloring algorithm for large scheduling problems.
\newblock {\em Journal of research of the National Bureau of Standards}, 84 6:489--506, 1979.
\newblock URL: \url{https://api.semanticscholar.org/CorpusID:16043293}.

\bibitem{ColoringSurvey1}
Enrico Malaguti and Paolo Toth.
\newblock A survey on vertex coloring problems.
\newblock {\em Int. Trans. Oper. Res.}, 17:1--34, 2010.
\newblock URL: \url{https://api.semanticscholar.org/CorpusID:9199689}.

\bibitem{BCPHeur}
Rafael Marti, Francisco Gortázar, and Abraham Duarte.
\newblock Heuristics for the bandwidth colouring problem.
\newblock {\em Int. J. Metaheuristics}, 1:11--29, 01 2010.
\newblock \href {https://doi.org/10.1504/IJMHEUR.2010.033121} {\path{doi:10.1504/IJMHEUR.2010.033121}}.

\bibitem{SetCover}
Anuj Mehrotra and Michael~A. Trick.
\newblock A column generation approach for graph coloring.
\newblock {\em INFORMS J. Comput.}, 8:344--354, 1996.
\newblock URL: \url{https://api.semanticscholar.org/CorpusID:13978746}.

\bibitem{VP_Asymm}
Isabel Méndez-Díaz and Paula Zabala.
\newblock A cutting plane algorithm for graph coloring.
\newblock {\em Discrete Applied Mathematics}, 156(2):159--179, 2008.
\newblock Computational Methods for Graph Coloring and it's Generalizations.
\newblock URL: \url{https://www.sciencedirect.com/science/article/pii/S0166218X0700100X}, \href {https://doi.org/10.1016/j.dam.2006.07.010} {\path{doi:10.1016/j.dam.2006.07.010}}.

\bibitem{SeqEncoding}
Carsten Sinz.
\newblock Towards an optimal {CNF} encoding of boolean cardinality constraints.
\newblock In Peter van Beek, editor, {\em Principles and Practice of Constraint Programming - {CP} 2005, 11th}, volume 3709 of {\em Lecture Notes in Computer Science}, pages 827--831. Springer, 2005.
\newblock \href {https://doi.org/10.1007/11564751\_73} {\path{doi:10.1007/11564751\_73}}.

\bibitem{Tamura2009}
Naoyuki Tamura, Akiko Taga, Satoshi Kitagawa, and Mutsunori Banbara.
\newblock Compiling finite linear csp into sat.
\newblock {\em Constraints}, 14(2):254--272, 2009.

\bibitem{dimacs_graphs}
Michael Trick.
\newblock {DIMACS} graph coloring instances.
\newblock \url{https://mat.tepper.cmu.edu/COLOR04/}, 2002.

\bibitem{Gelder}
Allen {Van Gelder}.
\newblock Another look at graph coloring via propositional satisfiability.
\newblock {\em Discrete Applied Mathematics}, 156(2):230--243, 2008.
\newblock Computational Methods for Graph Coloring and it's Generalizations.
\newblock URL: \url{https://www.sciencedirect.com/science/article/pii/S0166218X07001084}, \href {https://doi.org/10.1016/j.dam.2006.07.016} {\path{doi:10.1016/j.dam.2006.07.016}}.

\end{thebibliography}

\newpage
\appendix
\section{Detailed results of the nine models for the 143 DIMACS instances for the GCP}\label{sec:Appendix-GCP}
\begin{table}[h!]
\small
\scalebox{0.58}{
  \setlength{\tabcolsep}{1.5pt} 
  \setlength{\extrarowheight}{0.8pt}
  \begin{tabular}{l  rr || ccc | ccc | ccc | ccc | ccc  | ccc | ccc }
               &    &     &    &\gcppopsat   &         &   &\gcppophsat  &         &    &\gcpasssat    &         &    &\gcppopilp   &         &        &\gcppophilp                    &               &        &\gcpassilp                     &         &EC &CLICOL   &CDCL ~\tablefootnote{The code provided in the repository produced compile errors on our system, so we used the results of the experiments from \cite{CLICOLCOM} which did not contain the \texttt{R}-instances}       \\
  Instance     &V   &E    &lb  &ub         &time[s]  &lb  &ub          &time[s]  &lb      &ub     &time[s]  &lb      &ub     &time[s]  &lb      &ub                      &time[s]        &lb      &ub                       &time[s]  &time[s]       &time[s]                      &time[s]       \\
    \hline
1-FullIns\_3  &30  &100  &4  &4  &0.0  &4  &4  &0.0  &4  &4  &0.0  &4  &4  &0.0  &4  &4  &0.0  &4  &4  &0.0  &0.0  &0.2  &0  \\
1-FullIns\_4  &93  &593  &5  &5  &0.1  &5  &5  &0.1  &5  &5  &0.1  &5  &5  &0.1  &5  &5  &0.2  &5  &5  &0.1  &3600.1  &0.1  &0  \\
1-FullIns\_5  &282  &3247  &6  &6  &0.5  &6  &6  &0.6  &6  &6  &0.5  &6  &6  &361.2  &6  &6  &242.9  &6  &6  &29.4  &3600.5  &0.3  &0  \\
1-Insertions\_4  &67  &232  &5  &5  &1.4  &5  &5  &1.2  &5  &5  &1.0  &5  &5  &71.7  &5  &5  &480.4  &5  &5  &106.9  &3600.1  &1.9  &3600.0  \\
1-Insertions\_5  &202  &1227  &4  &6  &3600.0  &4  &6  &3600.0  &4  &6  &3600.0  &4  &6  &3600.0  &4  &6  &3600.0  &4  &6  &3600.0  &3602.6  &3600.0  &3600.0  \\
1-Insertions\_6  &607  &6337  &4  &7  &3600.0  &4  &7  &3600.0  &4  &7  &3600.0  &4  &7  &3600.1  &4  &7  &3600.0  &4  &7  &3600.0  &3600.0  &3600.0  &3600.0  \\
2-FullIns\_3  &52  &201  &5  &5  &0.0  &5  &5  &0.0  &5  &5  &0.0  &5  &5  &0.0  &5  &5  &0.0  &5  &5  &0.0  &0.0  &0.5  &0  \\
2-FullIns\_4  &212  &1621  &6  &6  &0.0  &6  &6  &0.0  &6  &6  &0.1  &6  &6  &1.8  &6  &6  &0.4  &6  &6  &0.2  &3600.1  &0.1  &0  \\
2-FullIns\_5  &852  &12201  &7  &7  &0.7  &7  &7  &0.6  &7  &7  &0.8  &6  &7  &3600.0  &7  &7  &1899.5  &7  &7  &983.6  &3602.5  &3.2  &20  \\
2-Insertions\_3  &37  &72  &4  &4  &0.0  &4  &4  &0.0  &4  &4  &0.0  &4  &4  &0.1  &4  &4  &0.1  &4  &4  &0.1  &220.2  &0.1  &0  \\
2-Insertions\_4  &149  &541  &3  &5  &3600.0  &3  &5  &3600.0  &3  &5  &3600.0  &4  &5  &3600.0  &4  &5  &3600.0  &4  &5  &3600.0  &3601.5  &3600.0  &3600.0  \\
2-Insertions\_5  &597  &3936  &3  &6  &3600.0  &3  &6  &3600.0  &3  &6  &3600.0  &3  &6  &3600.1  &3  &6  &3600.0  &3  &6  &3600.0  &3600.0  &3600.0  &3600.0  \\
3-FullIns\_3  &80  &346  &6  &6  &0.1  &6  &6  &0.1  &6  &6  &0.1  &6  &6  &0.0  &6  &6  &0.0  &6  &6  &0.0  &0.0  &0.5  &0  \\
3-FullIns\_4  &405  &3524  &7  &7  &0.1  &7  &7  &0.1  &7  &7  &0.1  &7  &7  &7.6  &7  &7  &1.4  &7  &7  &0.4  &3600.2  &0.4  &0  \\
3-FullIns\_5  &2030  &33751  &8  &8  &4.1  &8  &8  &5.9  &8  &8  &5.8  &7  &8  &3600.0  &7  &8  &3600.0  &7  &8  &3600.0  &3602.9  &43.0  &0  \\
3-Insertions\_3  &56  &110  &4  &4  &0.0  &4  &4  &0.0  &4  &4  &0.0  &4  &4  &0.8  &4  &4  &1.1  &4  &4  &1.4  &3600.1  &0.2  &1  \\
3-Insertions\_4  &281  &1046  &3  &5  &3600.0  &3  &5  &3600.0  &3  &5  &3600.0  &3  &5  &3600.0  &4  &5  &3600.0  &4  &5  &3600.0  &3613.8  &3600.0  &3600.0  \\
3-Insertions\_5  &1406  &9695  &3  &6  &3600.0  &3  &6  &3600.0  &3  &6  &3600.0  &3  &6  &3600.1  &3  &6  &3600.0  &3  &6  &3600.0  &3600.0  &3600.0  &3600.0  \\
4-FullIns\_3  &114  &541  &7  &7  &0.1  &7  &7  &0.1  &7  &7  &0.0  &7  &7  &0.0  &7  &7  &0.0  &7  &7  &0.0  &0.0  &0.2  &0  \\
4-FullIns\_4  &690  &6650  &8  &8  &0.4  &8  &8  &0.5  &8  &8  &0.6  &8  &8  &60.1  &8  &8  &0.6  &8  &8  &0.5  &3600.7  &1.2  &0  \\
4-FullIns\_5  &4146  &77305  &9  &9  &63.8  &9  &9  &93.3  &9  &9  &75.6  &8  &9  &3600.0  &8  &9  &3600.0  &8  &9  &3600.0  &3629.8  &678.1  &730  \\
4-Insertions\_3  &79  &156  &4  &4  &0.1  &4  &4  &0.1  &4  &4  &0.2  &4  &4  &17.5  &4  &4  &15.7  &4  &4  &48.3  &3600.2  &0.2  &417  \\
4-Insertions\_4  &475  &1795  &3  &5  &3600.0  &3  &5  &3600.0  &3  &5  &3600.0  &3  &5  &3600.0  &3  &5  &3600.0  &3  &5  &3600.0  &3600.0  &3600.0  &3600.0  \\
5-FullIns\_3  &154  &792  &8  &8  &0.2  &8  &8  &0.1  &8  &8  &0.2  &8  &8  &0.0  &8  &8  &0.0  &8  &8  &0.0  &0.0  &0.4  &0  \\
5-FullIns\_4  &1085  &11395  &9  &9  &1.1  &9  &9  &1.4  &9  &9  &3.1  &9  &9  &31.8  &9  &9  &0.9  &9  &9  &0.9  &3600.3  &7.0  &0  \\
abb313GPIA  &1555  &53356  &9  &9  &0.5  &9  &9  &0.2  &9  &9  &0.2  &8  &9  &3600.1  &8  &10  &3600.0  &8  &9  &3600.0  &3600.0  &3600.0  &3600.0  \\
anna  &138  &493  &11  &11  &0.0  &11  &11  &0.0  &11  &11  &0.0  &11  &11  &0.0  &11  &11  &0.0  &11  &11  &0.0  &0.0  &0.4  &0  \\
ash331GPIA  &662  &4181  &4  &4  &0.1  &4  &4  &0.1  &4  &4  &0.1  &4  &4  &6.3  &4  &4  &16.0  &4  &4  &28.6  &3611.7  &0.2  &0  \\
ash608GPIA  &1216  &7844  &4  &4  &0.1  &4  &4  &0.1  &4  &4  &0.0  &4  &4  &10.7  &4  &4  &34.1  &4  &4  &74.3  &3600.0  &0.5  &4  \\
ash958GPIA  &1916  &12506  &4  &4  &0.1  &4  &4  &0.1  &4  &4  &0.1  &4  &4  &57.5  &4  &4  &217.9  &4  &4  &872.6  &3600.0  &1.4  &29  \\
david  &87  &406  &11  &11  &0.1  &11  &11  &0.0  &11  &11  &0.0  &11  &11  &0.0  &11  &11  &0.0  &11  &11  &0.0  &0.0  &0.0  &0  \\
DSJC1000.1  &1000  &49629  &6  &27  &3600.0  &6  &27  &3600.0  &6  &27  &3600.0  &6  &0  &3600.0  &6  &0  &3600.0  &6  &0  &3600.0  &3600.0  &3600.0  &3600.0  \\
DSJC1000.5  &1000  &249826  &16  &115  &3600.0  &16  &115  &3600.0  &16  &115  &3600.0  &14  &0  &3603.0  &14  &0  &3600.1  &14  &0  &3600.8  &3600.0  &3600.0  &3600.0  \\
DSJC1000.9  &1000  &449449  &60  &299  &3600.0  &59  &299  &3600.0  &59  &299  &3600.0  &-  &-  &-  &-  &-  &-  &-  &-  &-  &3600.0  &3600.0  &3600.0  \\
DSJC125.1  &125  &736  &5  &5  &0.0  &5  &5  &0.1  &5  &5  &0.1  &5  &5  &1.4  &5  &5  &2.5  &5  &5  &2.1  &102.7  &0.1  &0  \\
DSJC125.5  &125  &3891  &13  &22  &3600.0  &13  &22  &3600.0  &13  &22  &3600.0  &11  &20  &3600.0  &13  &0  &3600.0  &13  &19  &3600.0  &3600.5  &3600.0  &3600.0  \\
DSJC125.9  &125  &6961  &38  &51  &3600.0  &38  &51  &3600.0  &38  &51  &3600.0  &35  &47  &3600.0  &41  &44  &3600.0  &42  &44  &3600.0  &6.2  &3600.0  &3600.0  \\
DSJC250.1  &250  &3218  &5  &10  &3600.0  &6  &10  &3600.0  &6  &10  &3600.0  &5  &10  &3600.0  &5  &10  &3600.0  &5  &10  &3600.0  &3600.0  &3600.0  &3600.0  \\
DSJC250.5  &250  &15668  &14  &37  &3600.0  &14  &37  &3600.0  &14  &37  &3600.0  &12  &0  &3600.0  &15  &0  &3600.0  &14  &0  &3600.0  &3606.8  &3600.0  &3600.0  \\
DSJC250.9  &250  &27897  &44  &92  &3600.0  &44  &92  &3600.0  &44  &92  &3600.0  &41  &0  &3600.1  &47  &0  &3600.0  &43  &0  &3600.0  &3604.0  &3600.0  &3600.0  \\
DSJC500.1  &500  &12458  &6  &16  &3600.0  &6  &16  &3600.0  &6  &16  &3600.0  &6  &0  &3600.0  &6  &0  &3600.0  &6  &0  &3600.0  &3600.0  &3600.0  &3600.0  \\
DSJC500.5  &500  &62624  &15  &65  &3600.0  &15  &65  &3600.0  &15  &65  &3600.0  &13  &0  &3600.2  &14  &0  &3600.0  &13  &0  &3600.0  &3600.0  &3600.0  &3600.0  \\
DSJC500.9  &500  &112437  &53  &170  &3600.0  &53  &170  &3600.0  &53  &170  &3600.0  &48  &0  &3600.1  &47  &0  &3600.0  &47  &0  &3600.0  &3637.7  &3600.0  &3600.0  \\
DSJR500.1  &500  &3555  &12  &12  &0.0  &12  &12  &0.0  &12  &12  &0.0  &12  &12  &1.2  &12  &12  &0.3  &12  &12  &0.2  &255.1  &0.9  &0  \\
DSJR500.1c  &500  &121275  &81  &89  &3600.0  &80  &89  &3600.0  &80  &89  &3600.0  &74  &0  &3600.1  &81  &0  &3600.0  &75  &89  &3600.3  &695.1  &52.8  &3600.0  \\
DSJR500.5  &500  &58862  &122  &131  &3600.0  &122  &131  &3600.0  &122  &131  &3600.0  &115  &0  &3600.1  &122  &122  &334.5  &122  &122  &1373.1  &2602.2  &65.8  &3600.0  \\
flat1000\_50\_0  &1000  &245000  &16  &114  &3600.0  &16  &114  &3600.0  &16  &114  &3600.0  &14  &0  &3601.3  &14  &0  &3600.0  &14  &0  &3600.7  &3600.0  &3600.0  &3600.0  \\
flat1000\_60\_0  &1000  &245830  &16  &114  &3600.0  &16  &114  &3600.0  &16  &114  &3600.0  &13  &0  &3601.2  &13  &0  &3600.0  &13  &0  &3600.0  &3600.0  &3600.0  &3600.0  \\
flat1000\_76\_0  &1000  &246708  &16  &115  &3600.0  &16  &115  &3600.0  &16  &115  &3600.0  &13  &0  &3601.1  &13  &0  &3600.3  &13  &0  &3600.8  &3600.0  &3600.0  &3600.0  \\
flat300\_20\_0  &300  &21375  &14  &42  &3600.0  &14  &42  &3600.0  &14  &42  &3600.0  &11  &0  &3600.2  &13  &0  &3600.0  &13  &0  &3600.0  &545.8  &3600.0  &3600.0  \\
flat300\_26\_0  &300  &21633  &14  &41  &3600.0  &14  &41  &3600.0  &14  &41  &3600.0  &12  &0  &3600.1  &13  &0  &3600.0  &13  &0  &3600.0  &901.4  &3600.0  &3600.0  \\
flat300\_28\_0  &300  &21695  &14  &42  &3600.0  &14  &42  &3600.0  &14  &42  &3600.0  &12  &0  &3600.1  &14  &0  &3600.0  &14  &0  &3600.0  &3613.3  &3600.0  &3600.0  \\
fpsol2.i.1  &269  &11654  &65  &65  &0.1  &65  &65  &0.1  &65  &65  &0.1  &65  &65  &1.2  &65  &65  &0.3  &65  &65  &0.3  &0.4  &0.6  &0  \\
fpsol2.i.2  &363  &8691  &30  &30  &0.1  &30  &30  &0.2  &30  &30  &0.0  &28  &30  &3600.0  &30  &30  &0.2  &30  &30  &0.4  &0.4  &0.3  &0  \\
fpsol2.i.3  &363  &8688  &30  &30  &0.0  &30  &30  &0.0  &30  &30  &0.0  &29  &30  &3600.0  &30  &30  &0.2  &30  &30  &0.4  &0.4  &0.1  &0  \\
games120  &120  &638  &9  &9  &0.0  &9  &9  &0.0  &9  &9  &0.0  &9  &9  &0.0  &9  &9  &0.0  &9  &9  &0.0  &0.0  &0.1  &0  \\ 
homer  &556  &1629  &13  &13  &0.0  &13  &13  &0.0  &13  &13  &0.0  &13  &13  &61.2  &13  &13  &0.1  &13  &13  &0.1  &0.1  &0.30  &0  \\
huck  &74  &301  &11  &11  &0.0  &11  &11  &0.0  &11  &11  &0.0  &11  &11  &0.2  &11  &11  &0.0  &11  &11  &0.0  &0.0  &0.1  &0  \\
inithx.i.1  &519  &18707  &54  &54  &0.1  &54  &54  &0.0  &54  &54  &0.1  &54  &54  &0.8  &54  &54  &0.3  &54  &54  &0.3  &1.3  &0.3  &0  \\
inithx.i.2  &558  &13979  &31  &31  &0.1  &31  &31  &0.0  &31  &31  &0.1  &29  &31  &3600.0  &31  &31  &0.2  &31  &31  &0.4  &0.2  &0.4  &0  \\
inithx.i.3  &559  &13969  &31  &31  &0.1  &31  &31  &0.2  &31  &31  &0.1  &29  &31  &3600.0  &31  &31  &0.2  &31  &31  &0.3  &0.3  &0.2  &0  \\
jean  &77  &254  &10  &10  &0.0  &10  &10  &0.1  &10  &10  &0.0  &10  &10  &0.0  &10  &10  &0.0  &10  &10  &0.0  &0.0  &0.1  &0  \\
latin\_square\_10  &900  &307350  &90  &132  &3600.0  &90  &132  &3600.0  &90  &132  &3600.0  &-  &-  &-  &-  &-  &-  &-  &-  &-  &3638.4  &- &3600.0  \\
le450\_15a  &450  &8168  &15  &15  &0.8  &15  &15  &0.3  &15  &15  &0.4  &15  &16  &3600.1  &15  &15  &328.1  &15  &15  &307.6  &3633.7  &1.1  &4  \\
le450\_15b  &450  &8169  &15  &15  &0.7  &15  &15  &0.2  &15  &15  &0.2  &15  &15  &1039.7  &15  &15  &567.0  &15  &15  &293.2  &3600.0  &0.3  &1  \\
le450\_15c  &450  &16680  &15  &23  &3600.0  &15  &23  &3600.0  &15  &23  &3600.0  &15  &23  &3600.0  &15  &0  &3600.0  &15  &0  &3600.7  &3600.0  &63.6  &3600.0  \\
le450\_15d  &450  &16750  &15  &24  &3600.0  &15  &24  &3600.0  &15  &24  &3600.0  &15  &0  &3600.0  &15  &0  &3600.0  &15  &0  &3600.4  &3600.0  &53.8  &3600.0  \\
le450\_25a  &450  &8260  &25  &25  &0.1  &25  &25  &0.1  &25  &25  &0.1  &25  &25  &26.8  &25  &25  &1.3  &25  &25  &0.6  &1.8  &0.2  &0  \\
le450\_25b  &450  &8263  &25  &25  &0.0  &25  &25  &0.0  &25  &25  &0.0  &25  &25  &10.9  &25  &25  &0.8  &25  &25  &0.6  &1.9  &0.2  &0  \\

\hline
\end{tabular}
}
\vspace*{5pt}
\caption{Results of the nine models for the 134 DIMACS instances and the 9 \texttt{R}-instances for the GCP}
\label{tableVcpDimacs}
\end{table}

\begin{table}[h!]
\small
\scalebox{0.57}{
  \setlength{\tabcolsep}{1.5pt} 
  \setlength{\extrarowheight}{0.8pt}
  \begin{tabular}{l  rr || ccc | ccc | ccc | ccc | ccc  | ccc | ccc }
               &    &     &    &\gcppopsat   &         &   &\gcppophsat  &         &    &\gcpasssat    &         &    &\gcppopilp   &         &        &\gcppophilp                    &               &        &\gcpassilp                     &         &EC &CLICOL   &CDCL       \\
  Instance     &V   &E    &lb  &ub         &time[s]  &lb  &ub          &time[s]  &lb      &ub     &time[s]  &lb      &ub     &time[s]  &lb      &ub                      &time[s]        &lb      &ub                       &time[s]  &time[s]       &time[s]                      &time[s]       \\
    \hline
le450\_25c  &450  &17343  &25  &29  &3600.0  &25  &29  &3600.0  &25  &29  &3600.0  &25  &29  &3600.0  &25  &28  &3600.0  &25  &29  &3600.1  &3690.4  &3600.0  &3600.0  \\
le450\_25d  &450  &17425  &25  &29  &3600.0  &25  &29  &3600.0  &25  &29  &3600.0  &25  &29  &3600.1  &25  &28  &3600.0  &25  &29  &3600.0  &3637.8  &3600.0  &3600.0  \\
le450\_5a  &450  &5714  &5  &5  &0.1  &5  &5  &0.0  &5  &5  &0.0  &5  &5  &108.5  &5  &5  &23.2  &5  &5  &62.9  &3600.0  &0.1  &41  \\
le450\_5b  &450  &5734  &5  &5  &0.0  &5  &5  &0.0  &5  &5  &0.0  &5  &5  &328.8  &5  &5  &41.2  &5  &5  &47.6  &3600.0  &0.1  &9  \\
le450\_5c  &450  &9803  &5  &5  &0.0  &5  &5  &0.1  &5  &5  &0.0  &5  &5  &92.4  &5  &5  &47.5  &5  &5  &19.3  &3600.0  &0.1  &2  \\
le450\_5d  &450  &9757  &5  &5  &0.1  &5  &5  &0.1  &5  &5  &0.0  &5  &5  &19.8  &5  &5  &49.8  &5  &5  &46.8  &3584.7  &0.1  &2  \\
miles1000  &128  &3216  &42  &42  &0.1  &42  &42  &0.1  &42  &42  &0.1  &42  &42  &0.5  &42  &42  &0.1  &42  &42  &0.1  &0.1  &0.6  &0  \\
miles1500  &128  &5198  &73  &73  &0.1  &73  &73  &0.2  &73  &73  &0.2  &73  &73  &0.5  &73  &73  &0.3  &73  &73  &0.3  &0.1  &0.1  &0  \\
miles250  &125  &387  &8  &8  &0.1  &8  &8  &0.0  &8  &8  &0.0  &8  &8  &0.0  &8  &8  &0.0  &8  &8  &0.0  &0.0  &0.1  &0  \\
miles500  &128  &1170  &20  &20  &0.1  &20  &20  &0.0  &20  &20  &0.1  &20  &20  &0.0  &20  &20  &0.0  &20  &20  &0.0  &0.0  &0.1  &0  \\
miles750  &128  &2113  &31  &31  &0.0  &31  &31  &0.0  &31  &31  &0.0  &31  &31  &0.1  &31  &31  &0.1  &31  &31  &0.1  &0.0  &0.1  &0  \\
mug100\_1  &100  &166  &4  &4  &0.1  &4  &4  &0.0  &4  &4  &0.0  &4  &4  &0.2  &4  &4  &0.1  &4  &4  &0.2  &0.6  &0.1  &0  \\
mug100\_25  &100  &166  &4  &4  &0.0  &4  &4  &0.0  &4  &4  &0.0  &4  &4  &0.2  &4  &4  &0.4  &4  &4  &0.2  &0.5  &0.1  &0  \\
mug88\_1  &88  &146  &4  &4  &0.0  &4  &4  &0.0  &4  &4  &0.0  &4  &4  &0.1  &4  &4  &0.1  &4  &4  &0.2  &0.3  &0.1  &0  \\
mug88\_25  &88  &146  &4  &4  &0.0  &4  &4  &0.0  &4  &4  &0.0  &4  &4  &0.2  &4  &4  &0.1  &4  &4  &0.2  &0.3  &0.1  &0  \\
mulsol.i.1  &138  &3925  &49  &49  &0.0  &49  &49  &0.1  &49  &49  &0.1  &49  &49  &0.4  &49  &49  &0.1  &49  &49  &0.1  &0.1  &0.1  &0  \\
mulsol.i.2  &173  &3885  &31  &31  &0.0  &31  &31  &0.0  &31  &31  &0.0  &31  &31  &0.3  &31  &31  &0.1  &31  &31  &0.1  &0.0  &0.1  &0  \\
mulsol.i.3  &174  &3916  &31  &31  &0.0  &31  &31  &0.0  &31  &31  &0.0  &31  &31  &0.2  &31  &31  &0.1  &31  &31  &0.1  &0.0  &0.2  &0  \\
mulsol.i.4  &175  &3946  &31  &31  &0.0  &31  &31  &0.0  &31  &31  &0.0  &31  &31  &0.4  &31  &31  &0.1  &31  &31  &0.1  &0.0  &0.1  &0  \\
mulsol.i.5  &176  &3973  &31  &31  &0.0  &31  &31  &0.0  &31  &31  &0.0  &30  &31  &3600.0  &31  &31  &0.1  &31  &31  &0.1  &0.0  &0.1  &0  \\
myciel3  &11  &20  &4  &4  &0.0  &4  &4  &0.1  &4  &4  &0.0  &4  &4  &0.0  &4  &4  &0.0  &4  &4  &0.0  &0.0  &0.1  &0  \\
myciel4  &23  &71  &5  &5  &0.0  &5  &5  &0.0  &5  &5  &0.0  &5  &5  &0.1  &5  &5  &0.1  &5  &5  &0.1  &4.8  &0.1  &0  \\
myciel5  &47  &236  &6  &6  &0.2  &6  &6  &0.2  &6  &6  &0.2  &6  &6  &37.5  &6  &6  &42.0  &6  &6  &44.3  &3600.0  &0.5  &0  \\
myciel6  &95  &755  &7  &7  &99.4  &7  &7  &74.3  &7  &7  &63.6  &6  &7  &3600.0  &6  &7  &3600.0  &5  &7  &3600.0  &3600.1  &1045.3  &0  \\
myciel7  &191  &2360  &6  &8  &3600.0  &6  &8  &3600.0  &6  &8  &3600.0  &5  &8  &3600.0  &5  &8  &3600.0  &5  &8  &3600.0  &3600.7  &3600.0  &0  \\
qg.order100  &10000  &990000  &100  &116  &3600.0  &100  &116  &3600.0  &100  &116  &3600.0  &-  &-  &-  &-  &-  &-  &-  &-  &-  &3600.0  &3600.0  &3600.0  \\
qg.order30  &900  &26100  &30  &30  &0.8  &30  &30  &0.4  &30  &30  &0.5  &30  &35  &3600.2  &30  &30  &77.6  &30  &30  &186.5  &3600.0  &4.5  &0  \\
qg.order40  &1600  &62400  &40  &40  &7.5  &40  &40  &1.9  &40  &40  &2.4  &40  &0  &3600.0  &40  &40  &1798.7  &40  &43  &3600.2  &3600.0  &256.4  &8  \\
qg.order60  &3600  &212400  &60  &60  &1813.9  &60  &60  &835.9  &60  &60  &28.8  &60  &0  &3600.1  &60  &62  &3601.5  &60  &0  &3600.0  &3600.0  &315.9  &347  \\
queen10\_10  &100  &1470  &11  &11  &436.4  &11  &11  &758.5  &10  &14  &3600.0  &10  &12  &3600.0  &10  &11  &3600.0  &10  &12  &3600.0  &130.3  &3600.0  &3600.0  \\
queen11\_11  &121  &1980  &11  &11  &404.3  &11  &11  &2705.2  &11  &15  &3600.0  &11  &13  &3600.0  &11  &13  &3600.0  &11  &13  &3600.0  &3602.3  &3600.0  &3600.0  \\
queen12\_12  &144  &2596  &12  &16  &3600.0  &12  &16  &3600.0  &12  &16  &3600.0  &12  &15  &3600.0  &12  &14  &3600.0  &12  &14  &3600.0  &3604.7  &3600.0  &3600.0  \\
queen13\_13  &169  &3328  &13  &17  &3600.0  &13  &17  &3600.0  &13  &17  &3600.0  &13  &16  &3600.0  &13  &16  &3600.0  &13  &15  &3600.0  &3602.3  &3600.0  &3600.0  \\
queen14\_14  &196  &4186  &14  &19  &3600.0  &14  &19  &3600.0  &14  &19  &3600.0  &14  &17  &3600.0  &14  &17  &3600.0  &14  &17  &3600.0  &3602.4  &3600.0  &3600.0  \\
queen15\_15  &225  &5180  &15  &21  &3600.0  &15  &21  &3600.0  &15  &21  &3600.0  &15  &19  &3600.0  &15  &18  &3600.0  &15  &18  &3600.0  &3605.0  &3600.0  &3600.0  \\
queen16\_16  &256  &6320  &16  &23  &3600.0  &16  &23  &3600.0  &16  &23  &3600.0  &16  &21  &3600.0  &16  &20  &3600.0  &16  &19  &3600.0  &3621.9  &3600.0  &3600.0  \\
queen5\_5  &25  &160  &5  &5  &0.0  &5  &5  &0.0  &5  &5  &0.1  &5  &5  &0.0  &5  &5  &0.0  &5  &5  &0.0  &0.0  &0.4  &0  \\
queen6\_6  &36  &290  &7  &7  &0.1  &7  &7  &0.1  &7  &7  &0.0  &7  &7  &0.4  &7  &7  &0.2  &7  &7  &0.1  &0.2  &0.1  &0  \\
queen7\_7  &49  &476  &7  &7  &0.0  &7  &7  &0.0  &7  &7  &0.0  &7  &7  &0.3  &7  &7  &0.2  &7  &7  &0.3  &0.5  &0.1  &0  \\
queen8\_12  &96  &1368  &12  &12  &0.0  &12  &12  &0.1  &12  &12  &0.1  &12  &12  &4.5  &12  &12  &0.4  &12  &12  &0.4  &6.1  &0.3  &0  \\
queen8\_8  &64  &728  &9  &9  &3.4  &9  &9  &2.8  &9  &9  &2.6  &9  &9  &1199.1  &9  &9  &132.8  &9  &9  &31.6  &4.4  &13.9  &1  \\
queen9\_9  &81  &1056  &10  &10  &13.8  &10  &10  &159.2  &10  &10  &921.9  &9  &11  &3600.0  &10  &10  &251.2  &10  &10  &789.0  &7.1  &3600.0  &21  \\
r1000.1  &1000  &14378  &20  &20  &0.4  &20  &20  &0.4  &20  &20  &0.4  &20  &20  &44.2  &20  &20  &7.9  &20  &20  &5.6  &0.8  &0.6  &0  \\
r1000.1c  &1000  &485090  &84  &105  &3600.0  &83  &105  &3600.0  &83  &105  &3600.0  &76  &0  &3601.0  &77  &0  &3600.0  &75  &0  &3601.0  &3600.0  &3600.0  &3600.0  \\
r1000.5  &1000  &238267  &234  &244  &3600.0  &234  &244  &3600.0  &234  &234  &723.1  &212  &0  &3600.1  &212  &0  &3600.0  &212  &0  &3600.0  &3600.0  &- &3600.0  \\
R100\_1g  &98  &503  &5  &5  &0.2  &5  &5  &0.2  &5  &5  &0.3  &5  &5  &0.3  &5  &5  &1.8  &5  &5  &0.7  &384.3  &0.1  &-  \\
R100\_5g  &100  &2456  &12  &18  &3600.0  &12  &18  &3600.0  &12  &18  &3600.0  &10  &17  &3600.0  &12  &16  &3600.0  &12  &16  &3600.0  &2093.7  &3600.0  &-  \\
R100\_9g  &100  &4438  &35  &35  &511.0  &35  &35  &64.2  &35  &35  &63.6  &32  &36  &3600.1  &35  &35  &33.8  &35  &35  &28.5  &3.5  &254.2  &-  \\
r125.1  &122  &209  &5  &5  &0.0  &5  &5  &0.0  &5  &5  &0.0  &5  &5  &0.0  &5  &5  &0.0  &5  &5  &0.0  &0.0  &0.0  &0  \\
r125.1c  &125  &7501  &46  &46  &0.1  &46  &46  &0.0  &46  &46  &0.0  &46  &46  &0.1  &46  &46  &0.0  &46  &46  &0.0  &0.0  &0.1  &0  \\
r125.5  &125  &3838  &36  &36  &0.2  &36  &36  &0.0  &36  &36  &0.2  &33  &36  &3600.0  &36  &36  &0.6  &36  &36  &1.8  &11.6  &0.1  &0  \\
r250.1  &250  &867  &8  &8  &0.0  &8  &8  &0.0  &8  &8  &0.0  &8  &8  &0.0  &8  &8  &0.0  &8  &8  &0.0  &0.0  &0.1  &0  \\
r250.1c  &250  &30227  &64  &64  &0.5  &64  &64  &0.5  &64  &64  &0.5  &64  &64  &0.4  &64  &64  &0.2  &64  &64  &0.2  &36.3  &0.5  &3  \\
r250.5  &250  &14849  &65  &65  &0.5  &65  &65  &0.4  &65  &65  &0.6  &65  &67  &3600.3  &65  &65  &2.9  &65  &65  &2.8  &175.7  &2.2  &2  \\
R50\_1g  &41  &92  &3  &3  &0.0  &3  &3  &0.0  &3  &3  &0.0  &3  &3  &0.0  &3  &3  &0.0  &3  &3  &0.0  &0.5  &0.2  &-  \\
R50\_5g  &50  &612  &10  &10  &0.2  &10  &10  &0.2  &10  &10  &0.1  &10  &10  &7.1  &10  &10  &8.5  &10  &10  &2.5  &0.6  &0.1  &-  \\
R50\_9g  &50  &1092  &21  &21  &0.1  &21  &21  &0.0  &21  &21  &0.1  &21  &21  &0.1  &21  &21  &0.0  &21  &21  &0.0  &0.2  &0.1  &-  \\
R75\_1g  &69  &249  &4  &4  &0.1  &4  &4  &0.0  &4  &4  &0.2  &4  &4  &0.0  &4  &4  &0.0  &4  &4  &0.0  &1.8  &0.1  &-  \\
R75\_5g  &75  &1407  &12  &12  &41.2  &12  &12  &215.8  &12  &12  &1122.1  &10  &13  &3600.0  &11  &13  &3600.0  &11  &13  &3600.0  &215.1  &2029.3  &-  \\
R75\_9g  &75  &2513  &31  &36  &3600.0  &31  &36  &3600.0  &31  &36  &3600.0  &30  &33  &3600.0  &33  &33  &194.2  &33  &33  &135.1  &0.6  &3070.7  &-  \\
school1  &385  &19095  &14  &14  &0.2  &14  &14  &0.1  &14  &14  &0.2  &14  &14  &8.4  &14  &14  &6.5  &14  &14  &7.3  &3623.8  &0.7  &0  \\
school1\_nsh  &352  &14612  &14  &14  &0.2  &14  &14  &0.1  &14  &14  &0.1  &14  &14  &15.5  &14  &14  &12.6  &14  &14  &31.9  &1911.7  &0.3  &0  \\
wap01a  &2368  &110871  &41  &41  &2013.0  &41  &41  &2568.7  &41  &47  &3600.0  &40  &0  &3600.0  &40  &0  &3600.0  &40  &0  &3600.0  &3600.0  &589.8  &3600.0  \\
wap02a  &2464  &111742  &40  &40  &1097.7  &40  &40  &2879.0  &40  &47  &3600.0  &40  &0  &3600.1  &40  &0  &3600.0  &40  &0  &3600.0  &3600.0  &316.4  &3600.0  \\
wap03a  &4730  &286722  &40  &55  &3600.0  &40  &55  &3600.0  &40  &55  &3600.0  &40  &0  &3605.4  &40  &0  &3600.2  &40  &0  &3600.4  &3600.0  &3600.0  &3600.0  \\
wap04a  &5231  &294902  &40  &49  &3600.0  &40  &49  &3600.0  &40  &49  &3600.0  &40  &0  &3600.0  &40  &0  &3600.0  &40  &0  &3600.1  &3600.0  &-  &3600.0  \\
wap05a  &905  &43081  &50  &50  &0.5  &50  &50  &0.7  &50  &50  &1.1  &40  &50  &3600.2  &50  &50  &570.9  &41  &0  &3600.0  &4.3  &1.5  &1  \\
wap06a  &947  &43571  &40  &40  &64.0  &40  &40  &340.4  &40  &40  &380.2  &40  &49  &3600.9  &40  &0  &3600.0  &40  &47  &3600.1  &3600.0  &40.9  &3600.0  \\
wap07a  &1809  &103368  &40  &45  &3600.0  &40  &45  &3600.0  &40  &45  &3600.0  &40  &0  &3600.0  &40  &0  &3600.0  &40  &0  &3600.0  &3600.0  &3600.0  &3600.0  \\
wap08a  &1870  &104176  &40  &40  &461.2  &40  &40  &457.8  &40  &40  &1202.0  &39  &0  &3600.1  &40  &0  &3600.0  &40  &0  &3600.0  &3600.0  &3600.0  &3600.0  \\
will199GPIA  &701  &6772  &7  &7  &0.1  &7  &7  &0.1  &7  &7  &0.1  &7  &7  &0.5  &7  &7  &0.3  &7  &7  &0.3  &2.2  &0.8  &0  \\
zeroin.i.1  &126  &4100  &49  &49  &0.0  &49  &49  &0.0  &49  &49  &0.0  &49  &49  &0.1  &49  &49  &0.1  &49  &49  &0.1  &0.1  &0.1  &0  \\
zeroin.i.2  &157  &3541  &30  &30  &0.1  &30  &30  &0.0  &30  &30  &0.1  &30  &30  &0.8  &30  &30  &0.1  &30  &30  &0.1  &0.0  &0.1  &0  \\
zeroin.i.3  &157  &3540  &30  &30  &0.1  &30  &30  &0.1  &30  &30  &0.0  &30  &30  &0.8  &30  &30  &0.1  &30  &30  &0.1  &0.0  &0.1  &0  \\

\hline
\#solved           &      &       &    &    &\textbf{98}      &    &    &\textbf{98}      &    &    &95      &    &   &72      &    &    &88      &    &    &86      &73      &\textbf{98}       &84      \\
\hline
\end{tabular}
}
\caption*{Table \ref{tableVcpDimacs} (continued)}
\end{table}

\clearpage
\FloatBarrier

\section{Detailed results of the seven models for the 33 DIMACS instances for the BCP}\label{sec:Appendix-BCP}
\begin{table}[h!]
\small
\scalebox{0.64}{
  \setlength{\tabcolsep}{1.5pt} 
  \setlength{\extrarowheight}{1.2pt}
  \begin{tabular}{l  rr || ccc | ccc | ccc | ccc | ccc  | ccc | ccc }
               &    &     &    &\bcppopsat &         &   &\bcppophsat   &         &    &\bcpasssat   &         &    &\bcppopilp  &         &        &\bcppophilp                    &               &        &\bcpassilp                        &         &         &DFMM   &       \\
  Instance     &V   &E    &lb  &ub         &time[s]  &lb  &ub          &time[s]  &lb      &ub     &time[s]  &lb      &ub     &time[s]  &lb      &ub                      &time[s]        &lb      &ub                       &time[s]  &lb       &ub                      &time[s]       \\
    \hline
GEOM20    &20   &20    &21  &21   &0.0     &21  &21   &0.0     &21  &21   &0.0     &21  &21   &0.1     &21  &21  &0.1     &21  &21   &0.1     &21         &21  &0.0       \\
GEOM20a   &20   &37    &20  &20   &0.0     &20  &20   &0.0     &20  &20   &0.0     &20  &20   &0.5     &20  &20  &0.5     &20  &20   &0.6     &20         &20  &0.0       \\
GEOM20b   &20   &32    &13  &13   &0.0     &13  &13   &0.0     &13  &13   &0.0     &13  &13   &0.1     &13  &13  &0.0     &13  &13   &0.1     &13         &13  &0.0       \\
GEOM30    &30   &50    &28  &28   &0.0     &28  &28   &0.0     &28  &28   &0.1     &28  &28   &4.8     &28  &28  &2.6     &28  &28   &0.5     &28         &28  &0.1       \\
GEOM30a   &30   &81    &27  &27   &0.0     &27  &27   &0.0     &27  &27   &0.1     &27  &27   &3.1     &27  &27  &3.2     &27  &27   &2.8     &27         &27  &0.1       \\
GEOM30b   &30   &81    &26  &26   &0.0     &26  &26   &0.0     &26  &26   &0.0     &26  &26   &1.0     &26  &26  &2.7     &26  &26   &0.4     &26         &26  &0.0       \\
GEOM40    &40   &78    &28  &28   &0.0     &28  &28   &0.1     &28  &28   &0.1     &28  &28   &3.6     &28  &28  &8.4     &28  &28   &0.8     &28         &28  &0.1       \\
GEOM40a   &40   &146   &37  &37   &0.2     &37  &37   &0.3     &37  &37   &2.1     &37  &37   &133.1   &37  &37  &17.1    &37  &37   &12.8    &37         &37  &1.4       \\
GEOM40b   &40   &157   &33  &33   &0.1     &33  &33   &0.1     &33  &33   &1.5     &33  &33   &9.3     &33  &33  &240.4   &33  &33   &13.4    &33         &33  &2.1       \\
GEOM50    &50   &127   &28  &28   &0.1     &28  &28   &0.1     &28  &28   &0.2     &28  &28   &6.6     &28  &28  &36.7    &28  &28   &2.3     &28         &28  &0.3       \\
GEOM50a   &50   &238   &50  &50   &0.8     &50  &50   &1.0     &50  &50   &87.1    &50  &50   &280.4   &38  &50  &3600.0  &50  &50   &60.8    &50         &50  &374.4     \\
GEOM50b   &50   &249   &35  &35   &0.5     &35  &35   &0.5     &35  &35   &4.9     &35  &35   &2028.1  &35  &35  &683.9   &35  &35   &250.6   &35         &35  &144.7     \\
GEOM60    &60   &185   &33  &33   &0.2     &33  &33   &0.1     &33  &33   &0.3     &33  &33   &30.2    &33  &33  &56.7    &33  &33   &3.5     &33         &33  &1.1       \\
GEOM60a   &60   &339   &50  &50   &1.0     &50  &50   &0.8     &50  &50   &112.4   &50  &50   &1124.3  &38  &50  &3600.0  &50  &50   &170.4   &50         &50  &684.6     \\
GEOM60b   &60   &366   &41  &41   &2.4     &41  &41   &1.7     &41  &41   &29.3    &34  &41   &3600.0  &33  &41  &3600.0  &40  &42   &3600.0  &41         &41  &22915.9   \\
GEOM70    &70   &267   &38  &38   &0.1     &38  &38   &0.1     &38  &38   &2.3     &38  &38   &36.0    &38  &38  &36.6    &38  &38   &17.6    &38         &38  &2.4       \\
GEOM70a   &70   &459   &61  &61   &6.3     &61  &61   &6.8     &61  &61   &561.0   &44  &61   &3600.0  &35  &62  &3600.0  &60  &61   &3600.0  &61         &61  &24798.0   \\
GEOM70b   &70   &488   &47  &47   &4.9     &47  &47   &5.4     &47  &47   &146.0   &34  &49   &3600.0  &47  &47  &3318.3  &35  &50   &3600.1  &47         &47  &534.6     \\
GEOM80    &80   &349   &41  &41   &0.3     &41  &41   &0.3     &41  &41   &4.0     &41  &41   &132.9   &41  &41  &1103.3  &41  &41   &46.4    &41         &41  &8.2       \\
GEOM80a   &80   &612   &63  &63   &11.4    &63  &63   &10.3    &63  &63   &964.2   &40  &63   &3600.0  &31  &64  &3600.0  &49  &65   &3600.0  &63         &63  &87770.8   \\
GEOM80b   &80   &663   &60  &60   &9.8     &60  &60   &8.1     &60  &60   &894.4   &29  &62   &3600.0  &26  &62  &3600.0  &44  &66   &3600.0  &60         &60  &54320.9   \\
GEOM90    &90   &441   &46  &46   &2.0     &46  &46   &2.0     &46  &46   &21.5    &46  &46   &1679.2  &46  &46  &1607.1  &46  &46   &70.6    &46         &46  &55.2      \\
GEOM90a   &90   &789   &63  &63   &11.0    &63  &63   &12.6    &63  &63   &1447.3  &29  &67   &3600.0  &32  &65  &3600.0  &47  &69   &3600.1  &63         &63  &130050.1  \\
GEOM90b   &90   &860   &69  &69   &58.0    &69  &69   &56.2    &15  &69  &3600.0  &27  &75   &3600.1  &31  &73  &3600.0  &48  &85   &3600.1  &$-\infty$  &69  &172800.0  \\
GEOM100   &100  &547   &50  &50   &1.1     &50  &50   &1.5     &50  &50   &87.8    &33  &50   &3600.0  &41  &50  &3600.0  &50  &50   &238.1   &50         &50  &545.8     \\
GEOM100a  &100  &992   &66  &66   &185.0   &66  &66   &316.3   &13  &67   &3600.0  &27  &75   &3600.0  &36  &72  &3600.0  &35  &81   &3600.0  &$-\infty$  &70  &172800.0  \\
GEOM100b  &100  &1050  &71  &71   &156.6   &71  &71   &136.3   &15  &71  &3600.0  &24  &78   &3600.0  &22  &78  &3600.0  &42  &86   &3600.1  &$-\infty$  &71  &172800.0  \\
GEOM110   &110  &638   &50  &50   &1.7     &50  &50   &1.7     &50  &50   &98.9    &37  &50   &3600.0  &40  &51  &3600.0  &50  &50   &319.1   &50         &50  &2982.2    \\
GEOM110a  &110  &1207  &69  &69   &287.1   &69  &69   &160.5   &14  &70  &3600.0  &26  &77   &3600.0  &25  &79  &3600.0  &43  &88   &3600.1  &$-\infty$  &73  &172800.0  \\
GEOM110b  &110  &1256  &77  &77   &620.8   &77  &77   &608.1   &15  &78  &3600.0  &21  &86   &3600.0  &22  &84  &3600.0  &40  &95   &3600.1  &$-\infty$  &79  &172800.0  \\
GEOM120   &120  &773   &59  &59   &4.2     &59  &59   &4.5     &59  &59   &516.6   &24  &61   &3600.0  &43  &59  &3600.0  &59  &59   &609.2   &59         &59  &10778.2   \\
GEOM120a  &120  &1434  &82  &82   &338.2   &82  &82   &293.0   &16  &84  &3600.0  &19  &90   &3600.0  &28  &93  &3600.0  &46  &92   &3600.1  &$-\infty$  &84  &172800.0  \\
GEOM120b  &120  &1491  &16  &83  &3600.0  &16  &83  &3600.0  &16  &85  &3600.0  &19  &102  &3600.0  &28  &97  &3600.0  &39  &100  &3600.1  &$-\infty$  &85  &172800.0  \\
\hline
\#solved  &    &     &   &    &\textbf{32}      &   &    &\textbf{32}      &   &    &26      &   &    &17      &   &   &16      &   &    &20      &          &   &26        \\
\hline
\end{tabular}
}
\caption{Results of the seven models for the 33 DIMACS instances for the BCP}
\label{tableBcpDimacs}
\end{table}

\end{document}